\theoremstyle{plain}
\newtheorem{theorem}{Theorem}[section]
\newtheorem{lemma}[theorem]{Lemma}
\newtheorem{corollary}[theorem]{Corollary}
\theoremstyle{definition}
\newtheorem{definition}[theorem]{Definition}
\theoremstyle{remark}
\newcommand{\Var}{\operatorname{Var}}
\icmltitlerunning{Improved Policy Evaluation for Randomized Trials of Algorithmic Resource Allocation}
\begin{document}

\twocolumn[
\icmltitle{Improved Policy Evaluation for Randomized Trials of Algorithmic Resource Allocation}

% It is OKAY to include author information, even for blind
% submissions: the style file will automatically remove it for you
% unless you've provided the [accepted] option to the icml2022
% package.

% List of affiliations: The first argument should be a (short)
% identifier you will use later to specify author affiliations
% Academic affiliations should list Department, University, City, Region, Country
% Industry affiliations should list Company, City, Region, Country

% You can specify symbols, otherwise they are numbered in order.
% Ideally, you should not use this facility. Affiliations will be numbered
% in order of appearance and this is the preferred way.
\icmlsetsymbol{equal}{*}
\icmlsetsymbol{intern}{\dag}

\begin{icmlauthorlist}
\icmlauthor{Aditya Mate}{yyy,comp,intern}
\icmlauthor{Bryan Wilder}{zzz}
\icmlauthor{Aparna Taneja}{comp}
\icmlauthor{Milind Tambe}{yyy,comp}
\end{icmlauthorlist}

\icmlaffiliation{yyy}{Harvard University, USA}
\icmlaffiliation{zzz}{Carnegie Mellon University, USA}
\icmlaffiliation{comp}{Google Research, India}

\icmlcorrespondingauthor{Aditya Mate}{aditya\_mate@g.harvard.edu}

% You may provide any keywords that you
% find helpful for describing your paper; these are used to populate
% the "keywords" metadata in the PDF but will not be shown in the document
\icmlkeywords{Machine Learning, ICML}

\vskip 0.3in
]

% this must go after the closing bracket ] following \twocolumn[ ...

% This command actually creates the footnote in the first column
% listing the affiliations and the copyright notice.
% The command takes one argument, which is text to display at the start of the footnote.
% The \icmlEqualContribution command is standard text for equal contribution.
% Remove it (just {}) if you do not need this facility.

%\printAffiliationsAndNotice{}  % leave blank if no need to mention equal contribution
\printAffiliationsAndNotice{\icmlinternship} % otherwise use the standard text.

\begin{abstract}
We consider the task of evaluating policies of algorithmic resource allocation through randomized controlled trials (RCTs). Such policies are tasked with optimizing the utilization of limited intervention resources, with the goal of maximizing the benefits derived. Evaluation of such allocation policies through RCTs proves difficult, notwithstanding the scale of the trial, because the individuals' outcomes are inextricably interlinked through resource constraints controlling the policy decisions. 
Our key contribution is to present a new estimator leveraging our proposed novel concept, that involves retrospective reshuffling of participants across experimental arms at the end of an RCT. We identify conditions under which such reassignments are permissible and can be leveraged to construct counterfactual trials, whose outcomes can be accurately ascertained, for free. We prove theoretically that such an estimator is more accurate than common estimators based on sample means --- we show that it returns an unbiased estimate and simultaneously reduces variance. We demonstrate the value of our approach through empirical experiments on synthetic, semi-synthetic as well as real case study data and show improved estimation accuracy across the board.
\end{abstract}

\section{INTRODUCTION}

We consider a subclass of randomized controlled trials (RCTs) wherein the goal of the trial is to evaluate the efficacy of an algorithmic resource allocation policy. Such policies recommend an allocation (action), commonly utilizing tools such as reinforcement learning \cite{galstyan2004resource, tesauro2006hybrid}, variations of the multi-armed bandit framework \cite{jamieson2014lil}, network optimization \cite{wilder2021clinical, vaswani2015influence}, etc. As machine learning becomes increasingly widely applied in socially critical settings, such policies have been used to allocate limited resources in a variety of domains, including  campaign optimization \citep{Leskovec2007campaign, eagle2010network}, improving maternal healthcare \citep{biswas2021learn}, screening for hepatocellular carcinoma \citep{lee2019optimal}, monitoring tuberculosis patients \citep{mate2021risk},
% mitigating HIV rates among homeless youth \citep{wilder2021clinical}, 
etc. Such a policy may rank all individuals within a group, and offer a scarce resource, such as a home visit by a health worker, to the highest-ranked individuals within the group.

RCTs evaluating these resource allocation policies consist of $M$ experimental arms labeled $\{1,\dots, M \}$, with each arm consisting of $N$ unique, randomly assigned individuals. The policy in each experimental arm prescribes an allocation of resources while respecting some resource constraints. This process may be either single-shot (allocations made once) or sequential (new allocation decisions made adaptively over a series of rounds). Each policy, dictating its own resource allocation strategy, is evaluated at the end of the trial by analyzing the outcomes data from its corresponding experimental arm. The group-level decision making of resource allocation policies creates new challenges for their experimental analysis. 
In a standard RCT, the aim is to evaluate the treatment effect -- there is no resource constraint, so all participants receive treatment -- for the average participant \cite{angrist2009mostly}. As the outcomes of all participants are independent, analysts can simply compare outcomes in the treatment versus control groups. However, trials of resource allocation policies aim to evaluate the \textit{group} outcome of a set of individuals to whom the policy is applied (some of whom receive the resource, and some who do not). Even when the number of individuals in the trial is large by the standards of a normal RCT, randomized trials of allocation policies can suffer from high variance, leading to noisy and potentially erroneous estimates of the treatment effect. 

% planning anti-poaching patrols \citep{qian2016restless},

This high variance stems from two sources. First, the outcomes of individuals within a given arm are correlated because allocation policies typically consider all individuals \textit{jointly} in making an allocation decision (e.g., to respect budget constraints on the total number of individuals who may receive an intervention). This interdependence implies that we only see one independent sample of a policy's performance per RCT, instead of many (in contrast with one sample per participant in a standard RCT).
% Second, allocation policies are often used in settings where the resource is scarce, meaning that many individuals do not receive an allocation. These individuals are not truly impacted by the policy, but their presence generates additional variance in the average outcome of the group due to noise which is independent of the resource allocation.
Second, for many individuals the allocation decisions made by different policies being compared may coincide. For instance, no matter the policy applied, many individuals may consistently get screened in, or many may get screened out of receiving a resource. Because these individuals are oblivious to the policy employed and receive identical allocations under different test policies, their outcomes are never truly impacted by the policy employed. Yet their presence generates additional variance in the average outcome of the group due to stochasticity in their outcomes, independent of the policy being evaluated.
\textit{To our knowledge, no prior work has proposed strategies to mitigate these sources of variance in RCTs of resource allocation algorithms, despite the increasing use of ML-based policies in socially critical domains.}

% To our knowledge, no prior work confronts the challenge of evaluating resource allocation policies through RCTs, a critical gap given the increasing use of algorithmic policies in socially important domains. 
An intuitive first attempt at variance reduction would be to share information across arms. For example, we could average out fluctuations in the outcomes of individuals who do not receive an allocation by pooling together such individuals across all of the test policies. Unfortunately, such naive estimators are biased because the distribution of \textit{which} individuals receive a resource (or not) is different across the trial arms -- indeed, assessing the comparative efficacy of these distributions is exactly the point of the trial. A more sophisticated strategy would be to exploit overlap between the policies via inverse propensity weighting estimators which reweight participants in one arm to match the actions a different policy would have taken \cite{kuss2016propensity, austin2015moving}. However, such estimators are themselves subject to notoriously high variance, particularly when the overlap between policies is poor \cite{zhou2020propensity}. Moreover, as we discuss in Section~\ref{sec:methodology-propensity-scores}, propensity scores are impossible to calculate for sequential intervention settings, where unseen states prevent us from evaluating individual-level propensities for a different policy.

Our key contribution fixing these issues is a novel estimator for the treatment effect of resource allocation policies which exploits overlap between arms in a principled manner, is guaranteed to reduce variance, and is applicable to either single-stage or sequential settings. The main idea is to find a subset of individuals with the following property: if we ran a hypothetical trial where the arm assignments of these individuals were swapped, the allocations made for all individuals remain unchanged. Our estimator identifies such sets of individuals with this property and averages the outcomes of all corresponding hypothetical trials (which are observable because no allocation decisions were altered from the original trial). We make the following contributions: (1) we propose this novel estimator and prove that it produces an unbiased estimate of the treatment effect with a guaranteed reduction in variance compared to the standard estimator, implying that it has strictly smaller average error; (2) we show how this estimator can be implemented in a computationally efficient manner for a class of policies encompassing those most commonly used in real-world allocation decisions; (3) we conduct experiments on three domains leveraging synthetic, semi-synthetic, and real-world data. We show the application of our techniques to real-world case study data illustrating its usefulness in solically critical domains. Across the board, our estimator substantially cuts error (by up to $\sim70\%$) compared to available estimates. 

\section{PROBLEM FORMULATION}

\paragraph{RCT Setup:}

We consider a set of $N$ individuals. Each has a feature vector $\textbf{x} \coloneqq [\textbf{x}_o, \textbf{x}_u]$, where $\textbf{x}_o \in \mathcal{R}^o$ denotes the individual's observable features (such as demographics) while $\textbf{x}_u \in \mathcal{R}^u$ denotes unobservable characteristics which nevertheless influence outcomes. 
% Furthermore, each individual can be in a state $s$ belonging to state space $\mathcal{S}$. 
A resource allocation policy $\pi$ jointly considers all $N$ individuals, their joint matrix of observable features $\textbf{X}_o \in \mathcal{R}^{N \times o}$,
% and their vector of current states $\textbf{s} \in \mathcal{S}^N$ 
and prescribes an allocation (action) vector $ \textbf{a} \coloneqq \pi(\textbf{X}_o) \in \mathcal{A}^N$ where $\mathcal{A}$ denotes the space of possible actions for each individual. The allocations $\textbf{a}$ must respect resource constraints specific to the domain. On example is a budget constraint $||\textbf{a}|| \le B$ capping the total cost of allocated actions. Each individual then stochastically yields an outcome state $s\in \mathcal{S}$, according to an unknown function $P^*(\textbf{x},a,s)$ denoting the probability of the individual receiving outcome $s$.
% driving the participants' state transition and denoting the probability of the individual transitioning to state $s'$. 
We use $r(s,a)$ to denote the reward accrued by the policymaker from this outcome. We remark that this notation is equivalent to the standard potential outcomes framework \cite{rubin2005causal} where $r(s, a)$ is the realization of the individual's potential outcome given action $a$ and $P$ governs the joint distribution between $\textbf{x}$ and the potential outcomes. We adopt the present notation for easy generalization to the sequential setting.

The goal of an RCT is to compare $M$ such resource allocation policies using $M$ randomly constructed experimental arms $C_1, \dots C_M$ where $C_i$ denotes the set of individuals assigned to the $i^{th}$ arm. We use $\mathfrak{C} \coloneqq \{C_1, \dots C_M\}$ to denote this particular assignment of individuals to experimental arms, chosen uniformly at random from $\mathcal{C}$, which denotes the universal set of all possible assignments. At the end of an RCT, the analyst can compare the performance of the $M$ allocation policies by comparing the sum total of rewards accrued by participants within each arm, given as  $\textrm{Eval}(\pi_m) \coloneqq \sum_{i \in C_m}r\big(s(i),a(i)\big)$. 
% Additionally, the performance lift of $\pi_m$ can be benchmarked against a control policy $\pi_{\textsc{CON}} \in \{\pi_j | j \in [M]\}$ as:
% \begin{align}
%     & \Delta(\pi_m, \pi_{\textsc{CON}}) =\textrm{Eval}(\pi_m) - \textrm{Eval}(\pi_{\textsc{CON}})
%     \label{eq:naive-estimator}
% \end{align}

\textbf{Sequential RCTs:} We also consider RCTs involving a multi-stage resource allocation setup. Such RCTs run for a total of $T$ rounds (the single-stage setting above corresponds to $T=1$), where allocations $\textbf{a}_t \in \mathcal{A}^N$ must be made at each timestep $t \in [T]$ subject to potentially time-dependent resource constraints. At each time $t$, each individual has a state $s$ belonging to state space $\mathcal{S}$. We use $\textbf{s}_t$ to denote the $N$ individuals' states at timestep $t$. The policy $\pi$ may consider the entire history of previous states and actions to generate a new action allocation as 
$\textbf{a}_t \coloneqq \pi(\textbf{X}_o, \textbf{s}_{0: t-1}, \textbf{a}_{1:t-1})$. Similarly, the individual transitions to a new state $s_{t+1}$ according to the probability function $P^*(\textbf{x}, s_{0:t}, a_{1:t+1}, s_{t+1})$.
% where $\textbf{s}_{t_1:t_2} \coloneqq [\textbf{s}_{t_1}, \dots \textbf{s}_{t_2}]$ and $\textbf{a}_{t_1:t_2} \coloneqq [\textbf{a}_{t_1}, \dots \textbf{a}_{t_2}]$
The state and action trajectories of all $N$ participants are recorded for each of the $M$ experimental arms as matrices: $S_1, \dots, S_M \in \mathcal{S}^{N \times T+1}$ and $A_1, \dots, A_M \in \mathcal{A}^{N \times T}$, which facilitates similar computation of $\textrm{Eval}(\pi_m) \coloneqq \sum_{i \in C_m}r(S[i], A[i])$ 
% and $\Delta(\pi_m, \pi_{\textsc{CON}})$ 
as defined for the single-shot setting.

\textbf{Index-based policy:}
We define a specific class of policies -- called `index-based policies' -- for which a computationally efficient estimator can be derived. Index-based policies rank individuals according to some scoring rule that determines a prioritization among individuals for allocation of a resource. Such policies encompass most relevant resource allocation policies commonly employed in practice due to their transparency and ease of implementation \cite{ustun2019learning}. Note that this class includes seemingly unlikely candidates that allocate resources to individuals cyclically in a set order or benchmarks such as `control' groups (details in Appendix~\ref{app:index-based-policies}). Formally, index-based policies are defined for a binary action space $\mathcal{A} \coloneqq \{0,1\}$ with a budget constraint allowing at most $B$ individuals to receive the action $a=1$. The policy computes a time-dependent index $\Upsilon(\textbf{x}_o, s_{0:t-1}, a_{1:t-1})$ for each individual at each timestep $t$, based solely on the individual's  observable features $\textbf{x}_o$, trajectory of states $s$ and history of actions $a$ received.
% which captures the attractiveness value of allocating an intervention resource to the individual at that time.
The policy $\pi$ allocates action $a=1$ to the top $B$ individuals with the largest values of index $\Upsilon$. At any given time $t$, we assume the value of $\Upsilon$ is unique for each individual, i.e., the policy induces a total ordering. The key feature of an index-based policy is that $\Upsilon$ is computed independently for each individual based only on their own features and history. 

% Our proposed idea extends to all deterministic allocation policies $\pi$, although we leverage the structure of index-based policies to design efficient algorithms described later. 

% \subsection{Construction of Experimental Cohort} 
% We are given outcomes measured from an RCT on a cohort $\mathfrak{C}$, consisting of $M$ experimental arms, each implementing a unique intervention allocation policy $\pi_m~\forall m \in [M]$. Each of the $M$ arms in the experiment consists of a sub-cohort of $N$ independent individuals. $\mathfrak{C}$ is determined by the assignment of these individuals to $M$ arms, $N$ individuals each arm. $\mathfrak{C}$ is chosen at random from the set of all allowable experimental cohorts $\mathcal{C}$. 
% % \hl{May want to edit this to a single setting:}We consider two possible constructions of the set of allowable cohorts, $\mathcal{C}$. Let $\mathcal{C}_{\textsc{TOT}}$ denote the set of all possible cohorts with $N$ individuals each arm, and let $\mathcal{C}_{\textsc{EQUAL}}$ denote the set of all cohort constructions such that the joint distribution of observed features $\textbf{f}_o$ of individuals within an arm, is similar across arms. Specifically, we define two feature vectors $\textbf{f}_{o1} \in \mathbb{R}^o$ and $\textbf{f}_{o2} \in \mathbb{R}^o$ to be indistinguishable if $\| \textbf{f}_{o1}- \textbf{f}_{o2} \| < \boldsymbol \epsilon \in \mathbb{R}^o$. 

\textbf{Problem Statement:} 
% The trial is run for a total of $T$ timesteps. At each timestep in the experiment, $\pi_m$ (implemented on arm $m$), deterministically selects a subset of $k$ individuals ($k \ll N$) from arm $m$, to deliver interventions to. Note that at all times, each individual's probability of transitioning to some next state $s'$, is fully determined by the action $a$ received, its current state $s$ and its transition function $P^*$ --- and is detached from the policy $\pi_m$ that determined the action choice $a$. At the end of the experiment, the state trajectories of all $N$ beneficiaries, $\textbf{s}_m \in \mathcal{S}^{N \times T}$ and actions taken $\textbf{a}_m \in \mathcal{A}^{N \times T-1}$ are recorded for each arm $m$. 
% The planner measures the estimated performance of an intervention planning policy $\pi_m$ from the observed samples as: $\textrm{Eval}(\pi_m) \coloneqq \sum_{i \in [N]}\sum_{t \in [T]}s_m(i,t)$. Additionally, the planner may estimate the performance lift of $\pi_m$, against a control policy $\pi_{\textsc{CON}} \in \{\pi_j | j \in [M]\}$ as: 
% \begin{align}
%     & \Delta(\pi_m, \pi_{\textsc{CON}}) =\textrm{Eval}(\pi_m) - \textrm{Eval}(\pi_{\textsc{CON}})
%     \label{eq:naive-estimator}
% \end{align}
$\textrm{Eval}(\pi_m)$ 
% and $\Delta(\pi_m, \pi_{\textsc{CON}})$ 
provides a single random sample with which to estimate the \textit{expected} performance 
% (or lift)
of allocation policy $\pi_m$, combining the randomness in the assignments $\mathfrak{C} \in \mathcal{C}$ and the randomness in outcomes within each $\mathfrak{C}$, engendered by stochasticity in state transitions. Let $\textrm{Eval}^*(\pi_m)$ 
% and $\Delta^*(\pi_m, \pi_{\textsc{CON}})$ 
be the expected value of the performance of $\pi_m$: 
% (expectation computed over all random instances $\mathfrak{C}$ and random trajectories induced by $P^*$) defined formally as:
\begin{align}
    \label{eq:eval-star-definition}
    \textrm{Eval}^*(\pi_m) & \coloneqq \mathbb{E}_{S_m \sim P^*}\mathbb{E}_{\mathfrak{C} \sim \mathcal{C}}\Big[\textrm{Eval}(\pi_m)\Big]
    % \label{eq:delta-star-definition}
    % \Delta^*(\pi_m, \pi_{\textsc{CON}}) & \coloneqq \textrm{Eval}^*(\pi_m)- \textrm{Eval}^*(\pi_{\textsc{CON}})
\end{align}
% $\Delta^*(\pi_m)$ can be defined as $$\Delta^*(\pi_m) \coloneqq \lim_{\eta \to\infty} \frac{\sum_{j=1}^{j=\eta} (\Delta^*(\pi_m)[j])}{\eta}$$. 
% Equation~\ref{eq:naive-estimator} provides a noisy, unbiased estimate of the true performance lift $\Delta^*(\pi_m, \pi_{\textsc{CON}})$. 
We assume data available from only a single run of an RCT. Our goal is to build an estimator that estimates $\textrm{Eval}^*(\pi_m)$ 
% and $\Delta^*(\pi_m, \pi_{\textsc{CON}})$ 
accurately from just the single RCT instance. We remark that our results make no distributional assumptions about the set of individuals in the trial, e.g., we do not require them to be IID from some distribution. Rather, we treat the observed individuals as fixed (nonrandom) and propose techniques that use only the randomness in the assignment process of the RCT. The only assumption required is that individuals' state transitions are independent, akin to the standard stable unit treatment values assumption (SUTVA) in causal inference \cite{hernan2010causal}. This is aligned with the growing emphasis on design-based inference in causal inference (c.f. \cite{hudgens2008toward,mukerjee2018using,abadie2022should}), which allows us to formulate methods which require minimal modeling assumptions.   

% The goal is to construct an estimator $\mathcal{E}$ which estimates the performance improvement of an algorithm $\texttt{ALG}$ in comparison to a control algorithm $\texttt{CON}$, implemented as two among the $m$ algorithms. The ground truth performance gap estimate $\mathcal{E}^*$ is defined as the expected value of difference in performance (computed according to metric of choice) of $\texttt{ALG}$ and $\texttt{CON}$. $\mathcal{E}$ accepts as input: the state trajectory matrix $\Gamma_i^{N \times L+1}$, intervention decisions matrix $\Lambda_i^{N \times L}$, feature matrix $\mathcal{F}_i^{N \times f}$ for all $N$ individuals in each experimental arm $i$ in $[m]$ . $\mathcal{E}$ returns an estimate for the performance gap. 

\section{RELATED WORK}

\paragraph{Off-Policy Evaluation:}
The most closely related previous work is the off-policy evaluation (OPE) literature, where the goal is to use samples collected under some baseline policy to inform the evaluation of a new policy \cite{sutton2008convergent,jiang2016doubly,wang2017optimal}. OPE makes frequent use of inverse propensity weighting estimators \cite{Liu2010,li2015toward}; we discuss strategies for constructing such estimators as well as their disadvantages in Section 4.1. To date, the OPE literature has largely focused on individual-level decisions, as opposed to the group-level resource allocation we study. One exception is slate-level OPE, where the policy recommends a ranked list of items to a user \cite{swaminathan2017off}. Slate OPE is most similar to our single-step case, while we develop methods that extend to the multi-step setting. Additionally, slate OPE is motivated by unobservable individual-level rewards, while we assume that individual rewards are observable and the challenge for the single-step setting is that policies may be deterministic (preventing us from using their methods).

\textbf{Individual treatment rules: } Recent work in statistics has studied experimental design and analysis for individual treatment rules (ITR), which are similar to our class of index-based policies \cite{imai2021experimental,athey2021policy}. The crucial difference is that ITRs make decisions independently for each individual, while in our setting the policy considers the group of individuals jointly, which is required for exact enforcement of constraints such as budgets. Our techniques are motivated by the need to reduce variance when policies can only be evaluated at group level. 

\textbf{Cluster-randomization and interference: } Our setting is related to a family of RCTs known as cluster-randomized trials (see \cite{hayes2017cluster} for an overview). In such trials, treatment is assigned at a group level (e.g., assignment of classrooms within a school instead of students), just as groups of individuals are assigned to policies in our setting. However, cluster-randomized trials are motivated by the potential for spillover effects, where the outcomes of one unit can influence others. By contrast, in our setting the outcomes of individuals are still independent conditioned on the actions of the policy. Accordingly, there is no need to account for potential correlations as in the interference literature; instead, we leverage this structured independence to develop lower-variance estimators. 

% This distinction also applies to the broader literature on causal inference under interference \cite{tchetgen2012causal,rosenbaum2007interference,aronow2017estimating,eckles2017design}.

% , which attempts to correct for spillovers during design or analysis. We focus on the setting where treatment decisions are made at group level but individuals are independent (i.e., no spillovers).  
% \citep{zhang2020inference}
% \citep{zhang2021statistical}
% \citep{parikh2022validating}

\section{METHODOLOGY}
\label{sec:methodology}
Our goal is to leverage the overlap in decisions of multiple resource allocation policies to improve our estimate of the reward from deploying each. We start by developing an estimator using inverse propensity weighting -- a popular approach typically adopted for such a task -- and show how it can be naturally applied to the single-stage setting. However, this natural estimator suffers from two challenges. First, inverse propensity estimators can suffer from notoriously high variance, a phenomenon that we empirically confirm in Section~\ref{sec:experiments}. Second, the approach breaks down entirely in the multi-stage setting, where (as detailed below) computation of propensities is impossible due to missing data. We resolve these challenges by developing a more stable ``assignment permutation" estimator, which applies to both settings and is guaranteed to reduce estimation error.  

% We first present a natural approach designed by leveraging literature on propensity scores to derive an estimator. This approach, although useful, is workable only for single-stage allocations because computing the propensity scores proves impossible in the multi-stage case, as we discuss below. Furthermore, even in single-stage trials, propensity scores are notorious for suffering from a high sample variance and accordingly yield unstable estimates. To make the estimations stable, techniques such as clipping of small values are commonly adopted in practice, which unfortunately trade off the unbiased property of these estimates. In the next subsection, we present our assignment permutation approach that intuitively leverages the overlap between policies to improve estimates, while overcoming the issues faced by propensity scores. Our approach yields stable, low-variance estimates and can also handle multi-stage allocation RCTs. 

\subsection{Propensity Scores Approach}
\label{sec:methodology-propensity-scores}
In typical off-policy evaluation settings, inverse propensity weighting (IPW) methods reweight samples according to the probability that observed actions would be taken by a given policy. These methods are not immediately applicable to our problem because we do not assume that policies are randomized -- indeed, explicit randomization is rare in policies deployed by real-world governments, health systems, etc. When policies are deterministic, the probability that they would yield an alternate action is precisely zero, leaving standard propensity estimators undefined. 

We show how to circumvent this issue in the single-step setting by leveraging an alternate source of randomness: the assignment in the trial itself. Calculated over the randomness in the assignment, each individual has some probability of being assigned a given action, denoted as $Pr_{\mathfrak{C}, \pi}[a(i) = a]$ (intuitively, whether they receive a resource depends on who else the policy is comparing them to). Formally, exchanging the order of expectations allows us to write $\text{Eval}^*(\pi)$ as:  
\begin{align*}
    \sum_{i = 1}^N Pr[i \in C_m] \sum_{a \in \mathcal{A}} Pr_{\mathfrak{C}, \pi}[a(i) = a| i \in C_m] \mathbb{E}_s[r(s, a)]
\end{align*}
Since the assignment $\mathfrak{C}$ is random, $ Pr[i \in C_m] = \frac{1}{M}$. Moreoever, in the inner term, conditioning on $i \in C_m$ leaves the other members of $C_m$ distributed uniformly at random. Accordingly, we can estimate $Pr_{\mathfrak{C}, \pi}[a(i) = a| i \in C_m]$ for any policy $\pi$ by drawing repeated samples of the assignment $\mathfrak{C}$ and running $\pi$ to reveal whether $\pi$ would have assigned $a(i) = a$ given the group $C_m$ containing individual $i$. Let $\hat{p}(i, a|\pi)$ denote the fraction of these samples in which $a(i) = a$. A standard IPW estimator for $\textrm{Eval}(\pi)$ is given by
\begin{align}\label{eq:prop-scores}
    \frac{1}{M}\sum_{m=1}^M\sum_{i \in C_m} \frac{\hat{p}(i, a(i)|\pi)}{\hat{p}(i, a(i)|\pi_m)} r(s(i), a(i)).
\end{align}
In a sequential setup ($T>1$), propensity score methods become entirely inapplicable for two reasons. First, standard multi-time step IPW estimators require randomness in the policy, while we assume that policies may be deterministic. We cannot use the alternate approach described above (leveraging randomness in assignments) over multiple time steps, because the marginal probability that individual $i$ receives action $a(i)$ on future steps depends on the state of all other individuals, and we do not have samples of such future states under counterfactual assignments. Second, even if we limited to randomized policies, standard off-policy methods calculate the probability of taking exactly the observed sequence of actions in the observed states. In our case, this requires computing the probability of $\pi$ selecting the \textit{vector} of actions assigned to each individual, i.e, we have a $N$-dimensional action space within each time step. Multi-step IPW estimators are already known to suffer from variance which explodes exponentially in $T$, often rendering them impractical \cite{li2015toward}. In our case, their variance would (in the worst case) scale exponentially in $N$ as well.

\subsection{Main Contribution: Assignment Permutation}

We present a novel approach that counters both challenges to compute a stable, accurate estimator. The key idea behind our estimator is to identify hypothetical trials with counterfactual experimental group assignments, whose reward outcomes can be exactly determined using the given outcomes from the original trial. We leverage the fact that although the state transitions depend only the received allocations, regardless of what policy $\pi$ chooses those allocations.

As a warm-up, consider a single-shot trial $\mathcal{T}$ in which two individuals $i$ and $j$ are assigned to policies $\pi_i$ and $\pi_j$, that make identical resource allocations $a$ to both individuals, yielding outcomes $s_i$ and $s_j$ respectively. Now consider a hypothetical trial $\mathcal{T}^\dag$, run exactly identical to $\mathcal{T}$ except that the assignments of $i$ and $j$ are switched. If in $\mathcal{T}^\dag$, both $i$ and $j$ receive the same allocation $a$ as in $\mathcal{T}$, allocations to other individuals would also remain unaffected, and consequently, all individuals would see identical inputs in both $\mathcal{T}^\dag$ and $\mathcal{T}$. Thus, the actual sample of outcomes $\textbf{s}$ in $\mathcal{T}$ is a sample from the same distribution as that induced by $\mathcal{T}^\dag$.
% as it would also appear in $\mathcal{T}^\dag$ with probability $P^*(\textbf{x},a,s)$. 
Generalizing this idea, consider a sequential RCT $\mathcal{T}$, in which a subset of individuals' group assignments are permuted to construct a hypothetical trial $\mathcal{T}^\dag$, which sees new allocations $\textbf{a}^\dag_t$ made at time $t$. 
If an individual experiences sub-trajectories of states $s_{0:t-1}$ and actions $a_{1:t-1}$ till time $t-1$ that are identical in both $\mathcal{T}^\dag$ and $\mathcal{T}$, and if the new allocation $a_t^\dag$ received is also identical to $a_t$, 
then original state sample $s_t$ observed in $\mathcal{T}$, is also a valid sample in $\mathcal{T}^\dag$, drawn from the same distribution, $P^*(\textbf{x}, s_{0:t-1}, a_{1:t}, s_{t})$.
% then the individual in the hypothetical trial would transition to state $s_t^\dag = s_t$, with a probability $P^*(\textbf{x}, s_{0:t-1}, a_{1:t}, s_{t})$, also same as the original. 
% Furthermore, inductively, $\forall~t~\in [T]$, if the new allocations $a_t^\dag$ are identical to the original $a_t$ for identical input sub-trajectories $s_{0:t-1}, a_{1:t-1}$, then the entire original state trajectory can be treated as a valid sample of the possible state trajectories observed in the hypothetical trial $\mathcal{T}^\dag$. 
Furthermore, inductively, the entire original state trajectory $s_{0:T}$ of $\mathcal{T}$ can be treated as a valid sample for $\mathcal{T}^\dag$ if $\forall~t~\in [T]$ the input sub-trajectory $s_{0:t-1}$, produces new allocations, $a_t^\dag$ that are identical to $a_t$. 

We exploit this concept to retrospectively check for all such possible reassignments, that would lead to the same sequence of output actions given the same input sub-sequence of the state-action trajectory at all times. The implication is that this allows us to uncover and aggregate outcomes from several such additional `observable counterfactual assignments' (defined below) in estimating the performance of a given test policy. Algorithm~\ref{alg:algorithm-shuffling-general} outlines the idea for a general $M$-arm setting. Later, in Algorithm~\ref{alg:shuffling-index-index} we present an efficient algorithm crafted for handling index-based policies.

\begin{definition}[Observable Counterfactual Assignment]
\label{def:counterfactual-observed}
For an actual assignment $\mathfrak{C}$, we define $\mathfrak{C}^\dag$ to be an observable counterfactual assignment if in a hypothetical trial with assignments $\mathfrak{C}^\dag$, for each $t=1...T$ the actions each policy would assign to each individual are identical to the original actions received, conditioned on the state and action histories ($\textbf{s}_{0:t-1}, \textbf{a}_{1:t-1}$) matching up until time $t-1$.
% it can be arrived at by exchanging individuals across experimental arms in $\mathfrak{C}$, such that for every individual, the new decisions (actions) $a^\dag(t) \in \mathcal{A}$ received in $\mathfrak{C}^\dag$ are identical to the original decisions $a(t)$ it originally received in $\mathfrak{C}$ for all $t \in [T]$. 

% I would be inclined to write it as that for each time step t, for each individual i, the action selected by pi is the same between the two trials when we use the original state trajectory as the input into pi in the counterfactual trial

% for all individuals and for all timesteps, despite the reshuffling.    
\end{definition}

\paragraph{Estimation via Assignment Permutation:}
% an equivalence class over $\mathfrak{C}$ (see Lemma~\ref{lem:equivalence-relation}) and the classes form a partition of $\mathcal{C}$ (the set of all possible assignments). 

% The key idea is to enlist all `observable counterfactual Cohorts', $\mathfrak{C}^\dag$ engendered by an actual experimental cohort $\mathfrak{C}$. 
Let $\mathcal{C}^\dag(\mathfrak{C})$ be the set of all  `observable counterfactual assignments' engendered by a single  actual experimental assignment $\mathfrak{C}$. Our proposed estimator averages the outcomes of all such observable counterfactuals:
\begin{equation}
\label{eq:Eval-dag-definition}
    \textrm{Eval}^{\dag}(\pi_m) \coloneqq \frac{\sum_{\mathfrak{C} \in \mathcal{C}^\dag} \textrm{Eval}(\pi_m | \mathfrak{C})}{|\mathcal{C}^\dag|}
\end{equation}

In Theorem~\ref{thm:unbiased}, we show that this is an unbiased estimator for the true expectation $\textrm{Eval}^*$. The main technical step (Lemma \ref{lem:equivalence-relation}) is to show that the that $\mathcal{C}^\dag(\mathfrak{C})$ defines a partition over $\mathcal{C}$ (the set of all possible assignments) where two assignments $\mathfrak{C}_1$, $\mathfrak{C}_2$ lie in the same part if $\mathcal{C}^\dag(\mathfrak{C}_1) = \mathcal{C}^\dag(\mathfrak{C}_2)$. Intuitively, this means that our estimator does not ``overweight" any particular counterfactual assignment; it maintains the equal weight that each has in $\textrm{Eval}^*$.  

% Formally, 
% $\mathbb{E}_{\mathfrak{C} \sim \mathcal{C}}\big[\mathbb{E}_{c \sim \mathcal{C}^\dag(\mathfrak{C})}[\textrm{Eval}(\pi |c)] \big] = \textrm{Eval}^*(\pi)$. 
% Utilizing this property, we define our assignment-permuted estimate, $\textrm{Eval}^{\dag}(\pi)$, as the average of estimates computed individually over each instance in $\mathcal{C}^\dag(\cdot)$ as:
% Furthermore, we define our estimated performance lift as: $\Delta^{\dag}(\pi_{m}, \pi_{\textsc{CON}}) \coloneqq \textrm{Eval}^{\dag}(\pi_m) - \textrm{Eval}^{\dag}(\pi_{\textsc{CON}})$.
% \begin{equation}
% \label{eq:Delta-dag-definition}
%     \Delta^{\dag}(\pi_{m}, \pi_{\textsc{CON}}) \coloneqq \frac{\sum_{\mathfrak{C} \in \mathcal{C}^\dag} \textrm{Eval}(\pi_{m} | \mathfrak{C})- \textrm{Eval}(\pi_{\textsc{CON}} | \mathfrak{C})}{|\mathcal{C}^\dag|}
% \end{equation}

\begin{algorithm}
\caption{Estimation through Assignment Permutation}
\label{alg:algorithm-shuffling-general}
\textbf{Input}: States  $\colon \{S_1^{N \times T+1}, \dots, S_M^{N \times T+1}\}$, Actions $\colon \{A_1^{N \times T}, \dots A_M^{N \times T}\}$, Assignment, $\mathfrak{C} \colon \{C_1, \dots, C_M\}$\\
\textbf{Output}: $\textrm{Eval}^\dag$ %\Delta^\dag$
\begin{algorithmic}[1] %[1] enables line numbers
\STATE Compute $\mathcal{C}^\dag$, the set of observable counterfactual assignments of $\mathfrak{C}$.
\STATE Compute $\textrm{Eval}^{\dag}(\pi_m) \coloneqq \frac{\sum_{\mathfrak{C} \in \mathcal{C}^\dag} \textrm{Eval}(\pi_m | \mathfrak{C})}{|\mathcal{C}^\dag|}$
% \STATE Compute $\Delta^{\dag}(\pi_{m}, \pi_{\textsc{CON}}) \coloneqq \textrm{Eval}^{\dag}(\pi_{m}) - \textrm{Eval}^{\dag}(\pi_{\textsc{CON}})$
\STATE \textbf{return} $\textrm{Eval}^{\dag}(\pi_m)$
\end{algorithmic}
\end{algorithm}

\subsection{THEORETICAL RESULTS}

In this section, we prove theoretically that $\textrm{Eval}^{\dag}(.)$ 
% and $\Delta^{\dag}(.)$ 
gives a more accurate estimate because it is unbiased and
% (shown in Theorem~\ref{thm:unbiased}, using Lemma~\ref{lem:equivalence-relation}) and 
simultaneously reduces variance.  
% (shown in Theorem~\ref{thm:variance-reduction-eval}).
% We show theoretical guarantees that prove that our estimator is unbiased. In the build-up to this theorem, we first show one useful lemma essential to the unbiased property. Secondly, we show that cohort permutation leads to a decrease in the sample variance, and hence yields more accurate estimates. 
% % We show theoretical guarantees on our algorithm under both settings of cohort construction: drawing $\mathfrak{C}$ at random from $\mathcal{C_{\textsc{Equal}}}$ (constrained cohorts) or drawing randomly from $\mathcal{C_{\textsc{TOT}}}$ (unconstrained cohorts). We show theoretically that $\Xi$ returns an unbiased estimate while suppressing the variance. 
% Let $\mathcal{C}^\dag(\mathfrak{C}) \subset \mathcal{C}$ denote the set of all possible observable counterfactual assignments that can be derived from a seed assignment $\mathfrak{C} ~\forall \mathfrak{C} \in \mathcal{C}$. 
% We define $\mathcal{C}^\dag(\mathfrak{C})$ to also include the seed assignment $\mathfrak{C}$ itself. 
Let $\dag$ be a homogeneous relation on $\mathcal{C}$, defined as: $\dag = \{(\mathfrak{C}_1, \mathfrak{C}_2) \in \mathcal{C} \times \mathcal{C}~\colon~\mathfrak{C}_2 \in \mathcal{C}^\dag(\mathfrak{C}_1)\}$. Intuitively, $\dag$ represents existence of a valid reshuffling to arrive at a counterfactual assignment $\mathfrak{C}_2$ from $\mathfrak{C}_1$.
\begin{restatable}[]{lemma}{equivalenceRelation}\label{lem:equivalence-relation}
The relation $\dag$ is an equivalence relation and the family of sets defined by $\mathcal{C}^\dag(\cdot)$ forms a partition over $\mathcal{C}$.
% Consider the set: \[\tilde{\mathcal{C}} \coloneqq \{\bar{\mathcal{C}}_{\textsc{TOT}}(\mathfrak{C}) | \mathfrak{C} \in \mathcal{C}_{\textsc{TOT}}\}\] Then the unique elements in  $\tilde{\mathcal{C}}$ 
\end{restatable}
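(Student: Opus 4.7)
The plan is to verify the three axioms of an equivalence relation (reflexivity, symmetry, transitivity) for $\dag$, and then observe that for any equivalence relation the equivalence classes form a partition, identifying these classes with the sets $\mathcal{C}^\dag(\mathfrak{C})$.

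First I would handle reflexivity, which is immediate: taking $\mathfrak{C}^\dag = \mathfrak{C}$ in the definition, the ``hypothetical trial" is just the original trial, so of course every policy assigns each individual the action they originally received, given matching histories. Hence $\mathfrak{C} \in \mathcal{C}^\dag(\mathfrak{C})$.

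The heart of the argument is a simple inductive observation I would state and prove first as a sub-claim: if $\mathfrak{C}_2 \in \mathcal{C}^\dag(\mathfrak{C}_1)$, then the entire state trajectory $\textbf{s}_{0:T}$ and action trajectory $\textbf{a}_{1:T}$ in the hypothetical trial under $\mathfrak{C}_2$ coincide with those observed under $\mathfrak{C}_1$. The base case $t=0$ is trivial (initial states are individual attributes, not policy choices). For the inductive step, assuming histories $\textbf{s}_{0:t-1},\textbf{a}_{1:t-1}$ coincide, Definition~\ref{def:counterfactual-observed} gives $\textbf{a}_t^\dag = \textbf{a}_t$, and since the state-transition law $P^*$ depends only on $\textbf{x}$ and the realized action history (and by SUTVA individuals are conditionally independent), the observed sample $\textbf{s}_t$ is a valid sample under $\mathfrak{C}_2$ as well. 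This sub-claim makes the rest mechanical.

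For symmetry, suppose $\mathfrak{C}_2 \in \mathcal{C}^\dag(\mathfrak{C}_1)$. By the sub-claim, the full state/action trajectories coincide. Now check the definition for $\mathfrak{C}_1 \in \mathcal{C}^\dag(\mathfrak{C}_2)$: at each $t$, conditioned on matching histories $\textbf{s}_{0:t-1},\textbf{a}_{1:t-1}$ (which do match, by the sub-claim applied in reverse), each policy in $\mathfrak{C}_1$ must choose the same action it did originally. But since the histories coincide and the group of individuals each policy sees under $\mathfrak{C}_1$ vs.\ $\mathfrak{C}_2$ yields, by hypothesis, identical action vectors, the condition holds. For transitivity, I chain the sub-claim twice: if trajectories coincide between $\mathfrak{C}_1$ and $\mathfrak{C}_2$ and between $\mathfrak{C}_2$ and $\mathfrak{C}_3$, they coincide between $\mathfrak{C}_1$ and $\mathfrak{C}_3$, and the definition for $\mathfrak{C}_3 \in \mathcal{C}^\dag(\mathfrak{C}_1)$ follows at once.

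Finally, once $\dag$ is an equivalence relation, the equivalence class of $\mathfrak{C}$ is exactly $\{\mathfrak{C}' : \mathfrak{C}' \in \mathcal{C}^\dag(\mathfrak{C})\} = \mathcal{C}^\dag(\mathfrak{C})$, and equivalence classes always partition the underlying set $\mathcal{C}$. I expect the main subtlety (not really an obstacle, but the step that needs care in writing) to be articulating symmetry cleanly: one has to be careful that ``matching histories" is a symmetric condition and that the policy choices on the two sides are being compared on the same conditioning event, which is precisely what the sub-claim about trajectory coincidence guarantees.
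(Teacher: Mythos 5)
Your proposal is correct and follows essentially the same route as the paper: verify reflexivity, symmetry, and transitivity, then invoke the standard fact that equivalence classes partition the underlying set. Your explicit inductive sub-claim about trajectory coincidence is simply a more careful spelling-out of what the paper's proof asserts directly (that allocations at all times are identical under related assignments), so the two arguments are the same in substance.
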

All proofs may be found in the appendix. We leverage this property to prove unbiasedness:

\begin{restatable}[]{theorem}{evalUnbiased}\label{thm:unbiased}
$\textrm{Eval}^\dag(\pi_m)~$ is an unbiased estimate of the expected value of the performance, $\textrm{Eval}^*(\pi_m)~\forall m \in [M]$, defined in equation~\ref{eq:eval-star-definition}. 
i.e. $$\mathbb{E}_{S_m \sim P^*}\mathbb{E}_{\mathfrak{C} \sim \mathcal{C}}[\textrm{Eval}^\dag(\pi_m)]=\textrm{Eval}^*(\pi_m)~\forall m \in [M]$$
% i.e: $\textrm{Eval}^*(\pi_m) = \mathbb{E}[\textrm{Eval}^\dag(\pi_m)]$
\end{restatable}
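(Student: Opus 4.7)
The plan is to unwind the double expectation by conditioning on the assignment $\mathfrak{C}$, apply the key distributional invariance that defines an observable counterfactual, and then invoke Lemma~\ref{lem:equivalence-relation} to average over a partition. I would first make explicit that $\textrm{Eval}(\pi_m\mid \mathfrak{C})$ is a random variable depending on both $\mathfrak{C}$ and the realized state trajectory $S$; write it as $\textrm{Eval}(\pi_m\mid \mathfrak{C}, S)$ so the outer expectation in $\textrm{Eval}^*(\pi_m)$ can be rewritten by the tower property as $\frac{1}{|\mathcal{C}|}\sum_{\mathfrak{C}\in\mathcal{C}} \mathbb{E}_{S\mid\mathfrak{C}}[\textrm{Eval}(\pi_m\mid\mathfrak{C},S)]$, using that $\mathfrak{C}$ is drawn uniformly from $\mathcal{C}$.

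Next, I would establish the core distributional claim: if $\mathfrak{C}^\dag \in \mathcal{C}^\dag(\mathfrak{C})$, then the state trajectory $S$ observed under $\mathfrak{C}$ has the same conditional distribution as the trajectory that would be observed under $\mathfrak{C}^\dag$. By Definition~\ref{def:counterfactual-observed}, the two trials induce identical action sequences at every time step conditioned on matching state-action histories, and by the SUTVA-style independence assumption each individual's transition kernel $P^*(\textbf{x}, s_{0:t-1}, a_{1:t}, s_t)$ depends only on that individual's own features, state history, and received actions. Inductively over $t$, this forces the joint distributions of trajectories to coincide, and hence $\mathbb{E}_{S\mid \mathfrak{C}}[\textrm{Eval}(\pi_m\mid\mathfrak{C}^\dag, S)] = \mathbb{E}_{S\mid \mathfrak{C}^\dag}[\textrm{Eval}(\pi_m\mid\mathfrak{C}^\dag, S)]$. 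This is the step that ties the estimator's observable quantities back to the expectation under the genuine distribution of the counterfactual trial, and is the main technical obstacle because one must handle the multi-step case carefully.

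With that in hand, compute
\begin{align*}
\mathbb{E}_{\mathfrak{C}}\mathbb{E}_{S\mid\mathfrak{C}}\bigl[\textrm{Eval}^\dag(\pi_m)\bigr]
&= \frac{1}{|\mathcal{C}|}\sum_{\mathfrak{C}\in\mathcal{C}} \frac{1}{|\mathcal{C}^\dag(\mathfrak{C})|}\sum_{\mathfrak{C}^\dag\in\mathcal{C}^\dag(\mathfrak{C})} \mathbb{E}_{S\mid\mathfrak{C}}[\textrm{Eval}(\pi_m\mid\mathfrak{C}^\dag, S)] \\
&= \frac{1}{|\mathcal{C}|}\sum_{\mathfrak{C}\in\mathcal{C}} \frac{1}{|\mathcal{C}^\dag(\mathfrak{C})|}\sum_{\mathfrak{C}^\dag\in\mathcal{C}^\dag(\mathfrak{C})} \mathbb{E}_{S\mid\mathfrak{C}^\dag}[\textrm{Eval}(\pi_m\mid\mathfrak{C}^\dag, S)],
\end{align*}
where the first line uses linearity of expectation and the second uses the invariance established above.

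Finally, I would apply Lemma~\ref{lem:equivalence-relation}: since $\mathcal{C}^\dag(\cdot)$ partitions $\mathcal{C}$ into equivalence classes $P_1,\dots,P_K$, the inner summand depends on $\mathfrak{C}$ only through the block containing it, so grouping terms by block gives
\begin{align*}
\frac{1}{|\mathcal{C}|}\sum_{k=1}^K \sum_{\mathfrak{C}\in P_k}\frac{1}{|P_k|}\sum_{\mathfrak{C}^\dag\in P_k} \mathbb{E}_{S\mid\mathfrak{C}^\dag}[\textrm{Eval}(\pi_m\mid\mathfrak{C}^\dag, S)]
= \frac{1}{|\mathcal{C}|}\sum_{\mathfrak{C}^\dag\in\mathcal{C}} \mathbb{E}_{S\mid\mathfrak{C}^\dag}[\textrm{Eval}(\pi_m\mid\mathfrak{C}^\dag, S)],
\end{align*}
which is exactly $\textrm{Eval}^*(\pi_m)$ by equation~\eqref{eq:eval-star-definition}. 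The proof then concludes. The main obstacle is the inductive distributional argument in the second paragraph; the combinatorial step is a clean consequence of the partition lemma, and linearity handles the rest.
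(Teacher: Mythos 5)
Your proposal is correct and follows essentially the same route as the paper's proof: both exchange the order of expectation and summation, exploit the uniformity of $\mathrm{Prob}[\mathfrak{C}]$, and then regroup the sum over $\mathcal{C}$ by the equivalence classes from Lemma~\ref{lem:equivalence-relation} so that each block contributes $|\mathcal{P}_j|$ copies of its own average. The only difference is that you spell out the inductive distributional-invariance step (that the observed trajectory under $\mathfrak{C}$ is a valid sample for any $\mathfrak{C}^\dag \in \mathcal{C}^\dag(\mathfrak{C})$), which the paper justifies informally in the main text rather than inside the proof itself; this is a welcome addition, not a departure.
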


\begin{restatable}[]{theorem}{varianceReductionEval}\label{thm:variance-reduction-eval}
The sample variance of our estimator, $\textrm{Eval}^\dag(\pi)$ is smaller than the standard estimator, $\textrm{Eval}(\pi)$:
\begin{align*}
&\Var(\textrm{Eval}(\pi)) - \Var(\textrm{Eval}^\dag(\pi)) = \\
  &\frac{1}{|\mathcal{C}|}\cdot \sum_{j \in [\eta]}\Bigg[\sum_{\mathfrak{C} \in \mathcal{P}_j}\textrm{Eval}^2(\pi|\mathfrak{C}) - \frac{\Big(\sum_{\mathfrak{C} \in \mathcal{P}_j}\textrm{Eval}(\pi|\mathfrak{C})\Big)^2}{|\mathcal{P}_j|}\Bigg]
\end{align*}
 $\geq 0$, where $\{\mathcal{P}_1, \dots, \mathcal{P}_\eta\}$ is the partition of $\mathcal{C}$ induced by $\dag$.
\end{restatable}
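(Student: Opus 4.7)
The plan is to recognize that, thanks to Lemma~\ref{lem:equivalence-relation}, the estimator $\textrm{Eval}^\dag(\pi|\mathfrak{C})$ is constant on every equivalence class $\mathcal{P}_j$, taking the within-class average value $\bar v_j := \frac{1}{|\mathcal{P}_j|}\sum_{\mathfrak{C}\in \mathcal{P}_j}\textrm{Eval}(\pi|\mathfrak{C})$. This means $\textrm{Eval}^\dag(\pi|\cdot)$ is precisely $\mathbb{E}[\textrm{Eval}(\pi|\mathfrak{C})\mid \mathcal{P}(\mathfrak{C})]$ under the uniform distribution over $\mathcal{C}$, where $\mathcal{P}(\mathfrak{C})$ denotes the part containing $\mathfrak{C}$. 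Once this structural observation is in hand, the result is essentially the law of total variance.

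First I would write out the sample variance of each estimator explicitly by averaging over $\mathfrak{C}\sim\mathcal{C}$ uniformly, namely
\begin{align*}
\Var(\textrm{Eval}(\pi)) &= \tfrac{1}{|\mathcal{C}|}\!\!\sum_{\mathfrak{C}\in\mathcal{C}}\!\textrm{Eval}^2(\pi|\mathfrak{C}) - \mu^2,\\
\Var(\textrm{Eval}^\dag(\pi)) &= \tfrac{1}{|\mathcal{C}|}\!\!\sum_{\mathfrak{C}\in\mathcal{C}}\!(\textrm{Eval}^\dag(\pi|\mathfrak{C}))^2 - (\mu^\dag)^2,
\end{align*}
where $\mu$ and $\mu^\dag$ denote the respective sample means. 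Next I would show $\mu = \mu^\dag$: using Lemma~\ref{lem:equivalence-relation} to reindex the double sum defining $\mu^\dag$ by $(\mathfrak{C},\mathfrak{C}')\in \mathcal{P}_j\times\mathcal{P}_j$, the inner average telescopes and recovers $\mu$. (This is also the content of Theorem~\ref{thm:unbiased} specialized to the assignment randomness.) Hence the mean-squared terms cancel in the difference, leaving only the second-moment difference to analyze.

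Then I would simplify $\sum_\mathfrak{C} (\textrm{Eval}^\dag(\pi|\mathfrak{C}))^2$ by partitioning the sum across the parts $\mathcal{P}_j$ and using that $\textrm{Eval}^\dag(\pi|\mathfrak{C}) = \bar v_j$ for all $\mathfrak{C}\in\mathcal{P}_j$. This collapses to $\sum_j |\mathcal{P}_j|\,\bar v_j^{\,2} = \sum_j \big(\sum_{\mathfrak{C}\in\mathcal{P}_j}\textrm{Eval}(\pi|\mathfrak{C})\big)^2\!/|\mathcal{P}_j|$. Partitioning $\sum_\mathfrak{C} \textrm{Eval}^2(\pi|\mathfrak{C})$ across the same $\mathcal{P}_j$ yields the first term of the bracketed expression in the theorem, and subtracting produces the claimed identity.

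Finally, non-negativity is immediate on a per-part basis: for each $j$, the Cauchy--Schwarz inequality $\big(\sum_{\mathfrak{C}\in\mathcal{P}_j}\textrm{Eval}(\pi|\mathfrak{C})\big)^2 \le |\mathcal{P}_j|\sum_{\mathfrak{C}\in\mathcal{P}_j}\textrm{Eval}^2(\pi|\mathfrak{C})$ makes the bracketed quantity a $|\mathcal{P}_j|$-times-variance expression, hence $\ge 0$; summing across $j$ preserves non-negativity. The bulk of the argument is mechanical; the only step that requires care is justifying that $\textrm{Eval}^\dag$ is genuinely constant on every $\mathcal{P}_j$, which rests entirely on Lemma~\ref{lem:equivalence-relation} (if $\mathfrak{C}_1,\mathfrak{C}_2$ lie in the same part, then $\mathcal{C}^\dag(\mathfrak{C}_1)=\mathcal{C}^\dag(\mathfrak{C}_2)$, so the averaging set in the definition of $\textrm{Eval}^\dag$ is identical). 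That identification is the main conceptual obstacle; everything downstream is algebra and a single application of Cauchy--Schwarz.
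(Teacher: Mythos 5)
Your proposal is correct and follows essentially the same route as the paper's proof: both hinge on $\textrm{Eval}^\dag$ being constant (equal to the within-part average) on each equivalence class $\mathcal{P}_j$ from Lemma~\ref{lem:equivalence-relation}, both decompose the sum over $\mathcal{C}$ across the parts, and both close with the same Cauchy--Schwarz application. The only cosmetic difference is that you center via second-moment-minus-mean-squared (checking $\mu=\mu^\dag$ explicitly), whereas the paper expands both variances directly around $\textrm{Eval}^*(\pi)$; the law-of-total-variance framing is a nice way to see the result but does not change the argument.
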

% \begin{theorem}
% \label{thm:variance-reduction-eval}
% The sample variance of our estimator, $\textrm{Eval}^\dag(\pi)$ is smaller than the sample variance of a standard estimator, $\textrm{Eval}(\pi)$ as given by $\textrm{Eval}(\pi)- \textrm{Eval}^\dag(\pi) =$
% \begin{align*}
%   \frac{1}{|\mathcal{C}|}\cdot \sum_{j \in [\eta]}\Bigg[\sum_{\mathfrak{C} \in \mathcal{P}_j}\textrm{Eval}^2(\pi|\mathfrak{C}) - \frac{\Big(\sum_{\mathfrak{C} \in \mathcal{P}_j}\textrm{Eval}(\pi|\mathfrak{C})\Big)^2}{|\mathcal{P}_j|}\Bigg]
% \end{align*}
%  where $\{\mathcal{P}_1, \dots, \mathcal{P}_\eta\}$ is the partition of $\mathcal{C}$ induced by $\dag$.
% \end{theorem}
\textit{Proof Sketch.} We compute the sample variance by first conditioning over the partition $\mathcal{P}_j$ (of the equivalence sets defined by $\dag$) that an instance of an assignment, $\mathfrak{C}$ belongs to and then accounting for the variance stemming from the candidate assignments $\mathfrak{C}$ within the partition.  
Finally, we use the Cauchy-Schwarz inequality to show that the right-hand-side expression in Theorem~\ref{thm:variance-reduction-eval} is non-negative.  $\qedsymbol$

The variance contraction expression of Theorem~\ref{thm:variance-reduction-eval} reduces to zero if and only if $\textrm{Eval}(\pi|\mathfrak{C})$ is identical $\forall \mathfrak{C} \in \mathcal{P}_j, ~\forall~j \in [\eta]$; if different assignments imply different rewards then our estimator exhibits a strict improvement in variance.

% Note that computing a raw estimate (not applying assignment permutation) is tantamount to computing the trivial special case in which each element of $\mathcal{C}$ forms its own subset in the partition (i.e. $\eta = |\mathcal{C}|$ and $|\mathcal{P}_j|=1~\forall~j$). Upon plugging this back in Theorem~\ref{thm:variance-reduction-eval}, the variance contraction expression vanishes, 
% $\textrm{Eval}(\pi)- \textrm{Eval}^\dag(\pi)=0$
% implying no contraction in variance --- thus subsuming the raw measurement estimator.  
% \begin{proof}
% \end{proof}

% \begin{corollary}
% \label{cor:variance-reduction-delta}
% The sample variance of the performance lift  measured by our proposed estimator is lower than the variance of the vanilla performance lift defined in Equation~\ref{eq:naive-estimator}, i.e.:
% \[\mathrm{Var}\Big(\Delta^{\dag}(\pi_m, \pi_{\textsc{CON}})\Big) \le \mathrm{Var}\Big(\Delta(\pi_m, \pi_{\textsc{CON}})\Big)~\forall m \in [M]\] 
% \end{corollary}
% \begin{proof}
% \end{proof}

% \begin{proof}
% We know that prove this by showing that the expected reward evaluated for $\pi_i$

% Let $\bar{\mathcal{C}}_{\textsc{EQUAL}}(\mathfrak{C}_1) \subset \mathcal{C}_{\textsc{EQUAL}}$ denote the set of parallel cohorts respecting the feature similarity constraint, that can be arrived at by permuting individuals across experimental arms, starting from the seed cohort $\mathfrak{C}_1$. 
% \end{proof}

% is combinatorial in $N$, given by $ |\mathcal{C}| = \prod_{j=1}^{j=m} {jN \choose N}$

\section{Efficient Swapping Algorithm}

Identifying $\mathcal{C}^\dag$ exhaustively in Algorithm~\ref{alg:algorithm-shuffling-general} involves iterating through every possible assignment in $\mathcal{C}$ and running the policy to determine if the assignment belongs to $\mathcal{C}^\dag$. However, the number of possible assignments $|\mathcal{C}|$ grows exponentially with $N$, making full enumeration infeasible. We show how this computational bottleneck can be circumvented for index-based policies with a modified estimator denoted $\textrm{Eval}^\dag_\Upsilon(\cdot)$. This estimator implicitly averages over a subset of the possible permutations in $\mathcal{C}^\dag$, trading off some variance reduction for computational efficiency.

% by approximating $\textrm{Eval}^\dag(\cdot)$ as  without explicitly enumerating all assignments in $\mathcal{C}^\dag$.

% $\textrm{Eval}^*(\pi_1)$. 
% and the performance lift, denoted for simplicity as $\Delta^* \coloneqq \Delta^*(\pi_1, \pi_0) = \textrm{Eval}^*(\pi_1)- \textrm{Eval}^*(\pi_0)$. 
% Note that the methodology extends to a general $M$. 

% Computing $\mathcal{C}^\dag$ exhaustively involves iterating through every possible assignment in $\mathcal{C}$ and playing out the resource allocation policy to determine if the assignment also belongs to $\mathcal{C}^\dag$. This is challenging because the size of the universal set of all assignment $\mathcal{C}$ is combinatorial in $N$, given by $ |\mathcal{C}| = \prod_{j=1}^{j=m} {jN \choose N}$. To circumvent this bottleneck, we formulate an efficient solution to compute $\textrm{Eval}^\dag(\pi)$ without explicitly enumerating all cohorts in $\mathcal{C}^\dag$. 

For ease of exposition, here we consider RCTs with two experimental arms ($M=2$) employing allocation policies $\pi_0$ and $\pi_1$ respectively. We aim to estimate $\textrm{Eval}^{*}(\pi_j)_{j=\{0,1\}}$.
Intuitively, instead of working with the space of all possible assignments, we instead consider all individuals participating in the trial and to identify non-overlapping groups of individuals $\{\textbf{G}_k\} \subset (C_0 \cup C_1)$ that satisfy certain desirable properties. Specifically, we intend to find sets of `compatible' individuals such that any subset of individuals within each group $\{\textbf{G}_k\}$ can be mutually swapped to arrive at either an unchanged assignment or a valid observable counterfactual assignment in $\mathcal{C}^\dag(\mathfrak{C})$. Our intention is to compute an estimate by replacing the original reward of every individual $i \in \textbf{G}_k$ by the average of rewards of all individuals in $\textbf{G}_k$, for every such group $\textbf{G}_k$ (justification in Theorem~\ref{thm:efficient-algorithm-works}). 
% We show later that doing so is tantamount to enumerating all  
We identify these groups by checking for two eligibility conditions pertaining to swappability of individuals.

The first eligibility condition for swapping two individuals $i$ and $j$ is that their original allocations $a(t)$ must be identical to each other $~\forall t$, to continue to satisfy the resource constraints in both arms after the swap. To check for this condition, we partition individuals into super-groups $\{\Bar{\textbf{G}}_1,\dots, \Bar{\textbf{G}}_{\kappa}\}$, putting all individuals experiencing the same action vector $a(t) \in \mathcal{A}^T$, in the same super-group, where $\kappa$ denotes the number of such super-groups. All individuals within each $\Bar{\textbf{G}}_k$ satisfy this first eligibility condition for being included in group $\textbf{G}_k$.
% , also equal to the number of unique action vectors $a(t)$ in $A_0\cup A_1$. 
For convenience, we let $\phi \colon C_0 \cup C_1 \to [\kappa]$ denote a many-to-one map identifying the super-group $\Bar{\textbf{G}}_{\phi(i)}$ that an individual $i$ belongs to.
% the group $\Bar{\textbf{C}}_{\phi(i)}$. 
% Then, according to the first condition, any individual $i$ in clique $\textbf{C}_k$, must satisfy $\phi(i)=k$.  
% such that  Any two individuals belonging to the same group $\Bar{\textbf{C}}_{k}$ automatically satisfy the first condition. 
% Let $\Phi = \{1, \dots, |\Phi|\}$ and let $\phi \colon C_0 \cup C_1 \mapsto \Phi$ denote a many-to-one map such that $~\forall~i \in \Bar{\textbf{C}}_{k}, \phi(i) = k$.

The second eligibility condition for swapping an individual is that their new allocation $a^\dag(t)$ under the new policy must be identical to the original $a(t)$, for the same sequence of input states as in the original trial. For each individual $i$, we use a binary-valued variable $\Lambda_i \in \{0,1\}$ to indicate satisfaction of this second condition. 
We introduce and exploit the `index-threshold' property here to verify this condition efficiently. We define an index threshold $\tau_j(t)$ as the smallest value among indices $\Upsilon(t)$ of individuals in $C_j$ at time $t$, that get picked to receive the allocation $a=1$ under policy $\pi_j$. 
To enable efficient computation, we only allow swaps within a group $\textbf{G}_k$ that maintain the index thresholds $\tau_j(t)$ at the same values as the original. 
% We discuss the importance of this constraint in the counterfactual assignments view below. 
% and show that it yields nice properties, including unbiasedness of this estimator. 
We implement this constraint by setting $\Lambda_i=0$ for all individuals exactly at the index threshold. Furthermore, for other individuals $i$, $\Lambda_i$ can be cheaply determined by just verifying if the index $\Upsilon^{\pi_j}_i(t)$ lies to the same side of threshold $\tau_j(t)~\forall~t \in [T]$ and for $j \in \{0,1\}$.
% We declare an individual $i$ swappable (setting $\Lambda_i=1$) if $~\forall~t$, their index lies to the same side of both thresholds, $\tau_0(t)$ and $\tau_1(t)$. If an individual is exactly at the threshold, we set their $\Lambda_i=0$ to maintain the threshold index of the arm. This leads to useful analytical properties as we discuss later below. 
To summarize 
\begin{equation}
\label{eq:lambda-cases}
\Lambda_i=
\begin{cases}
  % 1 &\text{if }[\Upsilon_i^{\pi_0}(t)-\tau_0(t)][\Upsilon_i^{\pi_1}(t)-\tau_1(t)]>0\forall t\in[T]\\
  1 &\text{if }\prod_{j=0}^{j=1}(\Upsilon_i^{\pi_j}(t)-\tau_j(t))>0~\forall~t\in[T]\\
  0 &  \text{ otherwise }
\end{cases}
\end{equation}
% before and after the swap (i.e. $\Lambda_i= \textrm{sgn}\big((\Upsilon_i(t)-\tau_0(t))\cdot(\Upsilon_i(t)-\tau_1(t))\big)$ 
Intuitively, $\Lambda_i=1$ means that $a^\dag_{1:T}(i) =a_{1:T}(i)$ and indicates that individual $i$ satisfies the second eligibility condition. Taking an intersection of both conditions, we form group $\textbf{G}_k$ by including all individuals $i \in \Bar{\textbf{G}}_k$ that have $\Lambda_i=1$.  
% the second eligibility condition for an individual $i$ to belong to a clique $\textbf{C}_k$ is: $\Lambda_i=1$. Finally, we construct clique $\textbf{C}_k$ by including all individuals satisfying $\phi(i)=k$ and $\Lambda_i=1$. 
For each group $\textbf{G}_k$, we compute the reward of a representative average individual as: 
\begin{align}
    \label{eq:representative-reward-clique}
    \Tilde{r}_{k} \coloneqq \frac{1}{|\textbf{G}_{k}|} \sum_{ i \in \textbf{G}_{k}}r(S[i], A[i])
\end{align} 

In computation of the final estimate $\textrm{Eval}^{\dag}_\Upsilon(\pi_j)$, we consider all individuals in the arm $C_j$, but replace the reward of swappable individuals among those (i.e. whose $\Lambda_i=1$) by $\Tilde{r}_{\phi(i)}$. We leave the rewards of other individuals unchanged. Finally we compute $\textrm{Eval}^{\dag}_\Upsilon(\pi_j)$ 
% and $\Delta^{\dag}_\Upsilon$ 
by summing up as:   
\begin{align}
    \label{eq:Eval-final-expression}
    \textrm{Eval}^{\dag}_\Upsilon(\pi_j) &= \sum_{i \in C_j}  \Big(\Lambda_i\Tilde{r}_{\phi(i)} + 
    (1-\Lambda_i) r(S[i], A[i])\Big) 
    % \\
    % \label{eq:Delta-final-expression}
    % \Delta^{\dag}_\Upsilon &= \textrm{Eval}^\dag_\Upsilon(\pi_1)  - \textrm{Eval}^\dag_\Upsilon(\pi_0)
\end{align}

Our theoretical analysis of this estimator establishes that it corresponds to an instance of the general permutation estimator which averages over a subset of the assignments in $\mathcal{C}^\dag$ (instead of the entire set). The main idea is that we can sub-partition $\mathcal{C}^\dag$ into sets with the same value of the index threshold (shown formally in Lemma~\ref{lem:equivalence-relation-efficient-algo}). We denote the part of $\mathcal{C}^\dag$ where the index thresholds are the same as in the actual trial as $\mathcal{C}^\dag_{\Upsilon}$.  Each permutation of individuals within groups $\{\textbf{G}_k\}$ corresponds to an assignment in $\mathcal{C}^\dag_\Upsilon(\mathfrak{C})$, and the final estimator averages over all such assignments:
\begin{restatable}[]{theorem}{efficientAlgorithmWorks}\label{thm:efficient-algorithm-works}
$\textrm{Eval}^\dag_{\Upsilon}(\cdot)$ computed as per Equation~\ref{eq:Eval-final-expression} computes the average of $\textrm{Eval}(\mathfrak{\pi | C})$ over all assignments in $\mathcal{C}^\dag_{\Upsilon}$. i.e. $\textrm{Eval}^\dag_{\Upsilon} (\pi) = \frac{\sum_{\mathfrak{C} \in \mathcal{C}^\dag_{\Upsilon}} \textrm{Eval}(\pi | \mathfrak{C})}{|\mathcal{C}^\dag_{\Upsilon}|}$
% Consider the set: \[\tilde{\mathcal{C}} \coloneqq \{\bar{\mathcal{C}}_{\textsc{TOT}}(\mathfrak{C}) | \mathfrak{C} \in \mathcal{C}_{\textsc{TOT}}\}\] Then the unique elements in  $\tilde{\mathcal{C}}$ 
\end{restatable}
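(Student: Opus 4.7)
The plan is to unfold the averaged estimator on the right-hand side of the theorem by summing $\textrm{Eval}(\pi | \mathfrak{C}')$ over all $\mathfrak{C}' \in \mathcal{C}^\dag_\Upsilon(\mathfrak{C})$, swapping the order of summation, and using a symmetry argument within each group $\textbf{G}_k$ to recover Equation~\ref{eq:Eval-final-expression}. The first ingredient is the characterization of $\mathcal{C}^\dag_\Upsilon(\mathfrak{C})$ promised by Lemma~\ref{lem:equivalence-relation-efficient-algo}: an assignment $\mathfrak{C}'$ lies in $\mathcal{C}^\dag_\Upsilon(\mathfrak{C})$ iff it is obtained from $\mathfrak{C}$ by independently relabelling arm memberships within each group $\textbf{G}_k$, while keeping every individual with $\Lambda_i = 0$ in its original arm. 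The two eligibility conditions baked into the construction of $\textbf{G}_k$ guarantee that any such relabelling preserves both the per-arm resource constraints (same action vector within a super-group) and the original index thresholds $\tau_j(t)$ (via condition~\ref{eq:lambda-cases}), so that the hypothetical trial replays the exact same action trajectory for every individual.

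The second ingredient, coming directly from Definition~\ref{def:counterfactual-observed}, is that for any $\mathfrak{C}' \in \mathcal{C}^\dag(\mathfrak{C}) \supseteq \mathcal{C}^\dag_\Upsilon(\mathfrak{C})$, each individual $i$ experiences the same action sequence (and hence can reuse the same observed state trajectory) as in the original trial. Therefore the reward $r(S[i], A[i])$ recorded in $\mathfrak{C}$ is a valid reward for $i$ under any $\mathfrak{C}' \in \mathcal{C}^\dag_\Upsilon(\mathfrak{C})$, and
\[
\sum_{\mathfrak{C}' \in \mathcal{C}^\dag_\Upsilon(\mathfrak{C})} \textrm{Eval}(\pi_j | \mathfrak{C}') \;=\; \sum_{\mathfrak{C}'} \sum_{i \,:\, i \in C_j \text{ under } \mathfrak{C}'} r(S[i], A[i]).
\]
Setting $n_k = |\textbf{G}_k|$ and $n_k^j = |\textbf{G}_k \cap C_j|$, swapping the order of summation and applying the elementary identity $\binom{n_k - 1}{n_k^j - 1} / \binom{n_k}{n_k^j} = n_k^j / n_k$ yields: individuals with $\Lambda_i = 0$ are counted $|\mathcal{C}^\dag_\Upsilon(\mathfrak{C})|$ times if $i \in C_j$ and $0$ otherwise, while each individual in $\textbf{G}_k$ with $\Lambda_i = 1$ is counted $(n_k^j / n_k)\cdot|\mathcal{C}^\dag_\Upsilon(\mathfrak{C})|$ times, by counting the number of ways to complete the remaining permutations across groups.

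Dividing by $|\mathcal{C}^\dag_\Upsilon(\mathfrak{C})|$ and using the definition of $\tilde{r}_k$ in Equation~\ref{eq:representative-reward-clique}, the contribution from $\textbf{G}_k$ collapses to $n_k^j \tilde{r}_k = \sum_{i \in \textbf{G}_k \cap C_j} \tilde{r}_{\phi(i)}$, which together with the $\Lambda_i = 0$ term exactly reproduces Equation~\ref{eq:Eval-final-expression}. The main obstacle is the invocation of Lemma~\ref{lem:equivalence-relation-efficient-algo}: one must verify carefully that the index-threshold eligibility condition~\ref{eq:lambda-cases} combined with the super-group (same-action-vector) condition jointly captures \emph{precisely} the assignments for which both policies' top-$B$ selections coincide with the original at every $t$, inductively across time. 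Once that characterization is in place, the remainder of the argument reduces to the symmetric combinatorial bookkeeping sketched above.
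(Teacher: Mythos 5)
Your proposal is correct and follows essentially the same route as the paper's proof: both expand $\sum_{\mathfrak{C}' \in \mathcal{C}^\dag_\Upsilon} \textrm{Eval}(\pi_j \mid \mathfrak{C}')$, swap the order of summation, split individuals into the swappable groups $\textbf{G}_k$ versus those with $\Lambda_i = 0$, and reduce each swappable individual's weight to the membership frequency $|\textbf{G}_{\phi(i)} \cap C_j|/|\textbf{G}_{\phi(i)}|$ (which you justify via the binomial identity and the paper writes as $\Pr(i \in C_j \mid \mathcal{C}^\dag_\Upsilon)$), collapsing the group contributions to $\tilde{r}_k$. The one caveat you raise --- that the within-group-permutation characterization of $\mathcal{C}^\dag_\Upsilon(\mathfrak{C})$ must be verified from the two eligibility conditions --- is a real point, but the paper's own proof relies on the same implicit characterization, so this does not distinguish the two arguments.
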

From this, $\textrm{Eval}^\dag_\Upsilon(\pi_j)$ is easily shown to inherit the desirable properties of the general permutation estimator, e.g., Corollary~\ref{cor:unbiased-eval-dag-upsilon} proves that it is also an unbiased estimator of $\textrm{Eval}^*(\pi_j)$. The tradeoff is a slight sacrifice in variance contraction as it yields smaller partitions $\mathcal{P}_j$ of $\mathcal{C}$ (as defined in Theorem~\ref{thm:variance-reduction-eval}), since we discard assignments with a different threshold. However, working with $\mathcal{C}^\dag_\Upsilon$ enables a computationally efficient algorithm for index policies, avoiding the exponential runtime of the general estimator.

\begin{algorithm}
\caption{Reshuffling between index based policies}
\label{alg:shuffling-index-index}
\textbf{Input}: $\mathfrak{C} \coloneqq \{C_0, C_1\}$, States: $S_0, S_1 \in \mathcal{S}^{N \times T+1}$, Actions: $A_0, A_1 \in \{0,1\}^{N \times T}$, Indexes: $\Upsilon^{\pi_0}, \Upsilon^{\pi_1} \in \mathbb{R}^{2N \times T}$\\
\textbf{Output}: Estimates: $\textrm{Eval}^\dag_\Upsilon$ %and $\Delta^\dag_\Upsilon$
\begin{algorithmic}[1] %[1] enables line numbers
\STATE Group individuals according to action vectors into $\{\Bar{\textbf{G}}_k\}$ and determine $\phi(i)~\forall$ individuals $ i \in C_0 \cup C_1$.
\STATE Determine $\forall t \in [T]$, index thresholds on both arms as: $\Bar{\tau}_j(t) \coloneqq \min \big\{\Upsilon_{\pi_j}(i, t)~| A_j[i,t] =1,~i \in [N] \big\}$
\STATE Determine $\Lambda_i~\forall~$ individuals $i$ according to Equation~\ref{eq:lambda-cases}.
\STATE Compute group $\textbf{G}_k$ as: $\{i \in \Bar{\textbf{G}}_k\ | \Lambda_i =1\}, \forall k \in [\kappa]$
\STATE For each group $\textbf{G}_k$, compute the average reward of a representative individual as per Equation~\ref{eq:representative-reward-clique}. 
\STATE Compute $\textrm{Eval}^\dag_\Upsilon(\pi_j)$ per Equation~\ref{eq:Eval-final-expression} %and \ref{eq:Delta-final-expression}.
% \STATE \textbf{return} $\textrm{Eval}^\dag_\Upsilon(\pi_j), \Delta^\dag_\Upsilon$
\end{algorithmic}
\end{algorithm}

\section{EMPIRICAL EVALUATION}
\label{sec:experiments}
We test our proposed methodology empirically on several datasets: (1) synthetic example (2) semi-synthetic tuberculosis medication adherence monitoring data and (3) real-world field trial data from an intervention for a maternal healthcare. We consider a state space $\mathcal{S} =\{0,1\}$, respectively representing an `undesirable' and a `desirable' state (of health, program engagement, etc.). The action space $\mathcal{A} =\{0,1\}$, denotes `no delivery' or `delivery' of an intervention. We assume a budget constraint, limiting the total number of interventions per time step. The reward function is defined as $r(s_{0:T}, a_{1:T}) = \sum_{t}s_t$, translating to an objective of maximizing the total time spent by individuals in state $s=1$. 

% For the synthetic datasets, we model individuals as Markov Decision Processes, as is common practice in prior works~\cite{}. Here individual state transitions are governed by a fixed transition matrix $P$ unique to each individual, with values of $P$ informed by the dataset. 

\subsection{Synthetic Dataset}
\label{sec:NR-SH-ID-data}
% We use this dataset to build intuition and demonstrate the value addition of our approach through a simpler setting. 
This setup consists of three types of individuals characterized by their $P-$ matrices. 
% $P_1$ and $P_2$ model `non-recoverable ($\texttt{NR}$)' and `self-healing ($\texttt{SH}$)' individuals, respectively, where 
$P_1$ and $P_2$ are designed such that it is always optimal to intervene on $P_1$ individuals consistently, whereas intervening on $P_2$ individuals is strictly sub-optimal (details in Appendix~\ref{app:experimental-results-NR-SH-ID-data}). $P_3$ individuals are unaffected by interventions, with transition dynamics independent of the action received. We consider two test policies $\pi_1$ and $\pi_2$, which are designed such that policy $\pi_j$ always chooses individuals of type $P_j$ to intervene on, when available, making $\pi_1$ the optimal policy. We simulate $300~P_1$ individuals, $300~P_2$ individuals and $(300^*\eta~P_3)$ individuals and  set an intervention budget of $300$ per timestep for $T=20$ timesteps. We measure the performance lift of $\pi_1$ against $\pi_2$ as: $\Delta \coloneqq \Delta(\pi_1, \pi_2)=\textrm{Eval}(\pi_1)-\textrm{Eval}(\pi_2)$.

\begin{figure}
     \centering
     \begin{subfigure}[b]{0.43\linewidth}
         \centering
         \includegraphics[width=\textwidth]{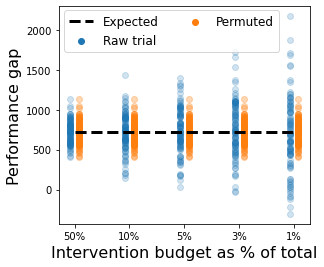}
         \caption{Total engagements}
         \label{fig:toy-scatter}
     \end{subfigure}
     \begin{subfigure}[b]{0.43\linewidth}
         \centering
         \includegraphics[width=\textwidth]{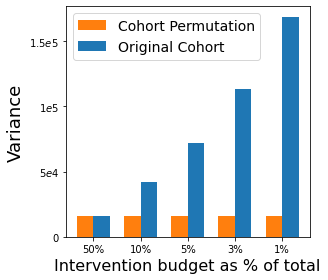}
         \caption{Sample Variance}
         \label{fig:toy-variance}
     \end{subfigure}
        \caption{Estimates and sample variance in synthetic domain.}
        \label{fig:performance-gap-comparison-total}
\end{figure}

% \begin{figure}
%     \centering
%     \includegraphics[width=0.23\textwidth]{figs/toyExample/independent-clincalTrials-total-L20.png}
%     \includegraphics[width=0.225\textwidth]{figs/toyExample/variance-total-L20.png}
%     \caption{(a)  (b) Sample Variance}
%     \label{fig:performance-gap-comparison-total}
% \end{figure}

% Building on the toy example, here we show that the benefits extend beyond cherry-picked examples, to large-sized trials. We simulate life-sized RCTs using the same individual distribution as the toy example, split into two experimental arms. Each arm consists of $300$ NR-type individuals, $300$ SH-type individuals and $300 \times n$ ID-type individuals, for different values of $n$. We assume a budget of $300$ interventions per time step. $\pi_1$ and $pi_2$, again always pick the NR- and SH- individuals respectively for interventions and the outcomes are simulated for $T=20$ timesteps. 

% Here we demonstrate the benefits of assignment permutation extend to even large-sized trials. We scale up the toy example above by simulating RCTs consisting of 

In Figure~\ref{fig:toy-scatter}, we vary the budget on the x-axis. 
Each blue dot in the scatter plot shows one independent RCT instance and measures the raw difference in rewards $\Delta$ on the y-axis. Applying assignment permutation maps each blue dot to an orange dot. The black dashed line marks the expected value of the performance lift. Visually, both colors are centered on the black line, as both estimators are unbiased. The assignment-permuted estimates lie closer to the expected value than the raw estimates, indicating a smaller sample variance. Quantitatively, we measure the sample variance of $\Delta$ on the y-axis in Figure~\ref{fig:toy-variance}. Variance reduces sharply upon applying assignment permutation -- for instance, at a budget level of $3\%$, our approach cuts the variance by $7 \times$, from $11.3\times10^4$ to $1.6\times10^4$. The intuition is that both $\pi_1$ and $\pi_2$ overlap in their decision to not intervene upon $P_3$ individuals. However, their final rewards are based partly on which $P_3$ individual gets (randomly) assigned to which group, independent of the underlying policies. Assignment permutation counters this randomness by  averaging over alternate assignments of the $P_3$ individuals.

\subsection{Semi-synthetic evaluation with tuberculosis dataset}

We use real tuberculosis medication adherence monitoring data, consisting of daily records of patients in Mumbai, India, obtained from \cite{killian2019learning} and simulate patient behavior by estimating the $P$ matrix. More details can be found in the appendix.

\begin{figure}
     \centering
     \begin{subfigure}[b]{0.55\linewidth}
         \centering
         \includegraphics[width=\textwidth]{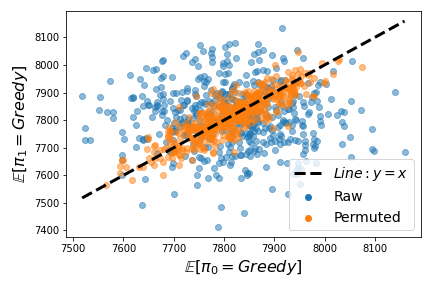}
         \caption{}
         \label{fig:multi-step-greedy-greedy-tb-scatter}
     \end{subfigure}
     \begin{subfigure}[b]{0.32\linewidth}
         \centering
         \includegraphics[width=\textwidth]{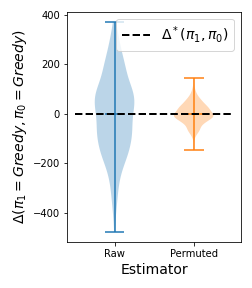}
         \caption{}
         \label{fig:multi-step-greedy-greedy-tb-violin}
     \end{subfigure}
        \caption{Multi-step setting. (a) The permuted estimates (orange) are closer to the true expectation (black line) than the raw estimates (blue) (b) Assignment permutation reduces variance.}
        \label{fig:multi-step-greedy-greedy-tb}
\end{figure}

\begin{figure*}[t]
\label{fig:single-shot}
    \centering
    \begin{minipage}{.44\textwidth}  
     \centering
     \begin{subfigure}[b]{0.57\linewidth}
         \centering
         \includegraphics[width=\textwidth]{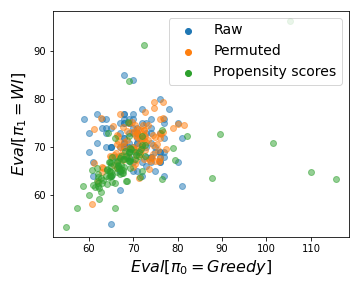}
         \caption{}
         \label{fig:single-shot-scatter}
     \end{subfigure}
     \begin{subfigure}[b]{0.39\linewidth}
         \centering
         \includegraphics[width=\textwidth]{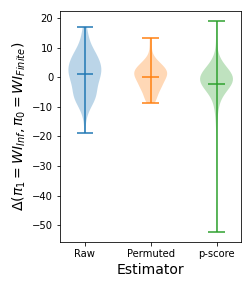}
         \caption{}
         \label{fig:single-shot-iolin}
     \end{subfigure}
     
     \caption{Illustration of results for single-step setting}
    \end{minipage}%
    \hfill
    \begin{minipage}{0.55\textwidth}
    \textit{Table 1:} Sample variance in Measured Performance Lift
    \vspace{2mm}
        % \caption{Sample variance in Measured Performance Lift}
\label{variance-table}
\begin{center}
\begin{small}
\begin{sc}

        \begin{tabular}{cccccccc}
\toprule
% T & $B$ & $\pi_1~\textrm{v}~\pi_0$                        & Raw & Permuted & IPW & $n$-val & $n$-RCT's\\
% \midrule
% $1$&    3\%     &$\pi_{\textsc{wi}}$ v $\pi_{\textsc{gr}}$     & 49.09 & 4.94 &\textbf{ 0.48}   & 9 & 4.84\\
% $1$&    25\%    &$\pi_{\textsc{wi}}$ v $\pi_{\textsc{gr}}$    & 49.45 & \textbf{19.94 }& 78.12 & 2 & 18.46\\
% $10$&   3\%     &$\pi_{\textsc{gr}}$ v $\pi_{\textsc{gr}}$     & 26356 & \textbf{1860} & NA    & 13 & 1755\\
% $10$&   3\%     &$\pi_{\textsc{wi}}$ v $\pi_{\textsc{gr}}$     & 2348 & \textbf{728} & NA      & 4 & 586\\
% $10$&   3\%     &$\pi_{\textsc{wi}}$ v $\pi_{\textsc{wi}}$     & 2381 & \textbf{916} & NA      & 3 & 763\\
% % $20$& x\% &$\pi_{\textsc{SH}}$ v $\pi_{\textsc{NR}}$     & 8.507 & 4.039 & NA &NA & NA\\
T & $B$ & $\pi_1~\textrm{v}~\pi_0$                        & raw & permuted & ipw & $n$-val \\
\midrule
$1$&    3\%     &$\pi_{\textsc{wi}}$ v $\pi_{\textsc{gr}}$     & 49.09 & 4.94 &\textbf{ 0.48}   & 9 \\
$1$&    10\%    &$\pi_{\textsc{wi}}$ v $\pi_{\textsc{gr}}$    & 49.86 & 15.11 & \textbf{6.66}   & 3 \\
$1$&    25\%    &$\pi_{\textsc{wi}}$ v $\pi_{\textsc{gr}}$    & 49.45 & \textbf{19.94 }& 78.12 & 2 \\
$10$&   3\%     &$\pi_{\textsc{wi}}$ v $\pi_{\textsc{wi}}$     & 2381 & \textbf{916} & NA      & 3 \\
$10$&   3\%     &$\pi_{\textsc{wi}}$ v $\pi_{\textsc{gr}}$     & 2348 & \textbf{728} & NA      & 4 \\
$10$&   3\%     &$\pi_{\textsc{gr}}$ v $\pi_{\textsc{gr}}$     & 26356 & \textbf{1860} & NA    & 13 \\
$10$&   10\%     &$\pi_{\textsc{gr}}$ v $\pi_{\textsc{gr}}$     & 25983 & \textbf{3808} & NA    & 7 \\
$10$&   25\%     &$\pi_{\textsc{gr}}$ v $\pi_{\textsc{gr}}$     & 23619 & \textbf{5477} & NA    & 5 \\
% $20$& x\% &$\pi_{\textsc{SH}}$ v $\pi_{\textsc{NR}}$     & 8.507 & 4.039 & NA &NA & NA\\
\bottomrule
\end{tabular}
\end{sc}
\end{small}
\medskip

\end{center}
\vskip -0.1in
    \end{minipage}%
\end{figure*}

% \begin{figure*}[!htb]
%      \centering
%      \begin{subfigure}[b]{0.47\linewidth}
%          \centering
%          \includegraphics[width=\textwidth]{}
%          \caption{}
%          \label{fig:armman-april21-reshuffled-s-agnostic-line}
%      \end{subfigure}
%      \begin{subfigure}[b]{0.47\linewidth}
%          \centering
%          \includegraphics[width=\textwidth]{}
%          \caption{}
%          \label{fig:armman-april21-reshuffled-s-agnostic-bars}
%      \end{subfigure}
%     \caption{Caption}
%         \label{fig:armman-april21-reshuffled-s-agnostic}
% \end{figure*}

% \begin{figure}[t]
%     \centering
%     \includegraphics[width=0.29\textwidth]{figs/tb/multi/new/scatter-N1000-k3-T10-Greedy-Greedy-Nseeds500-n15-withnRCTFalse-datatb-exp1a.png}
%     \includegraphics[width=0.17\textwidth]{figs/tb/multi/new/violin-N1000-k3-T10-Greedy-Greedy-Nseeds500-n15-withnRCTFalse-datatb-exp1a.png}
%     % \includegraphics[width=0.23\textwidth]{}
%     % \includegraphics[width=0.23\textwidth]{}
%     \caption{(a) The assignment-permuted measurements (orange) are scattered closer to the true expectation (black line) than the raw measurements (blue).}
%     \label{fig:multi-step-greedy-greedy-tb}
% \end{figure}

We consider two policies: a ``Whittle index" policy \cite{whittle1988restless} that attempts to maximize long-run reward, and a greedy policy which optimizes an estimate of next-step reward. We simulate $N = 1000$ patients in each arm and vary the budget constraint. We consider both the multi-step setting ($T = 10$) and single step ($T = 1$).  We compare three estimation methods: ``Raw" is the naive average of outcomes in each arm, ``Permuted" our proposed estimator, and ``IPW" the inverse propensity estimator from Section 4.1 (available only for $T=1$). In practice, we find it necessary to trim propensity scores for IPW \cite{zhou2020propensity} to the range [0.01, 0.99], since extreme values lead to very large variance. This introduces a slight bias, visible in Figure \ref{fig:single-shot-iolin}.

Table~\ref{variance-table} shows the sample variance of estimates returned by each method. For unbiased methods (Raw, Permuted) the variance is also their mean squared error; this holds approximately for IPW due to trimming. Table~\ref{variance-table} includes an additional column labeled `$n-$value'. To benchmark the improvement produced by our method, this gives the minimum number of \textit{independent} RCTs that would need to be run (and averaged over) to match the sample variance achieved by assignment permutation (computed by simulation). 

For all comparisons and parameter settings, we find that our assignment permutation estimator produces a substantial improvement in variance. Indeed, achieving a comparably precise estimate using the naive raw estimator would require running anywhere from 2 to \textit{13} independent RCTs. This underscores the importance of variance reduction -- running RCTs is hugely costly and assigns many individuals to suboptimal policies; improved analysis allows us to draw comparably precise conclusions at dramatically lower cost.

Figure~\ref{fig:multi-step-greedy-greedy-tb} illustrates this improvement in a single example where both trial arms are the Greedy policy and so the expected difference in rewards is exactly zero (with $B = 3\%$ and $T = 10$). Each dot in Figure~\ref{fig:multi-step-greedy-greedy-tb-scatter} corresponds to a single instance of a trial, with the x- and y-axes giving total engagements in the two arms. The black dashed line ($y=x$) denotes the expectation, $\Delta =0$. The blue dots, representing raw measurements, have a wider spread around the black line than the orange dots obtained via assignment permutation. Figure~\ref{fig:multi-step-greedy-greedy-tb-violin}, shows a violin plot of the sample difference in rewards between the two arms.  
Both violins are centered on the zero line, reflecting that the estimators are unbiased. The violin corresponding to assignment permutation is more compact, indicating lower sample variance. 

In the single-step setting, the IPW estimator returns mixed results: it has the best variance for small values of the budget, but actually performs \textit{worse} than the naive raw estimator for larger budgets. Essentially, the overlap between two policies becomes smaller as the budget increases because they agree only on the few highest-priority individuals. Low overlap translates into extreme propensity scores, inflating variance. Figures \ref{fig:single-shot-iolin} and \ref{fig:single-shot-scatter} show an illustration, where IPW often produces an improvement but is susceptible to large outliers. However, when overlap is high and we operate only in the single-step setting, IPW can be a valuable option.

\subsection{Case study: Real-world Trial}
% \begin{figure}[t]
%     \centering
%     \includegraphics[width=0.46\textwidth]{figs/armman/raw-april21-stateAgnostic2.png}
%     \includegraphics[width=0.46\textwidth]{figs/armman/cumEngDropPrev-april21-stateAgnostic2.png}
%     \caption{}
%     \label{fig:armman-april21-reshuffled-s-agnostic}
% \end{figure}

Our method is directly applicable to real-world settings; we show this by considering an actual large-scale RCT reported in \citep{mate2022field} evaluating a Restless Multi-Armed Bandit-based algorithm for  resource allocation in a maternal and child healthcare. The data consists of 23,000 real-world beneficiaries, randomly split between three groups for the trial: RMAB algorithm, baseline algorithm and a control group, which sees no interventions. Real-world health workers delivered interventions recommended by the algorithms. We consider the performance lift of the RMAB algorithm in improving engagement with the program in comparison to the control group and apply our proposed permutation algorithm to the originally reported raw results. Figure~\ref{fig:armman-april21-reshuffled-s-agnostic} (left) plots the total engagement numbers on the y-axis as a function of time (in weeks) on the x-axis. Figure~\ref{fig:armman-april21-reshuffled-s-agnostic} (right) computes the lift provided by the RMAB algorithm, as defined in \cite{mate2022field} on the y-axis. Our findings suggest that the performance lift of RMAB algorithm is larger than originally reported and by week 7, RMAB is estimated to prevent $815$ engagement drops, vs the originally reported $622$. 
% engagement drops. 
Since this is real data, the true values are unknown. However, this case study provides evidence that the variance reduction provided by our estimator can be significant in practice.
% \begin{figure}
%      \centering
%      \begin{subfigure}[b]{0.49\linewidth}
%          \centering
%          \includegraphics[width=\textwidth]{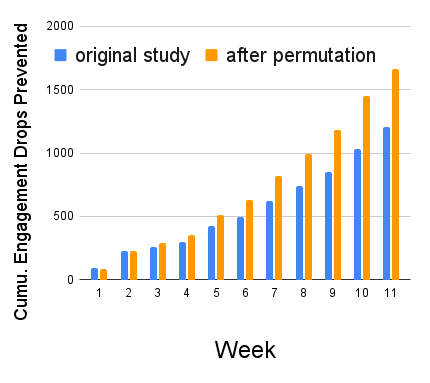}
%          \caption{}
%          \label{fig:armman-april21-reshuffled-s-agnostic-bars}
%      \end{subfigure}
%      \begin{subfigure}[b]{0.49\linewidth}
%          \centering
%          \includegraphics[width=\textwidth]{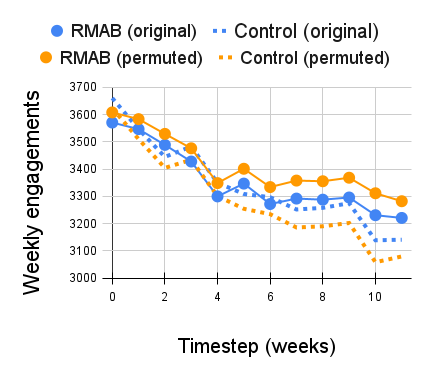}
%          \caption{}
%          \label{fig:armman-april21-reshuffled-s-agnostic-line}
%      \end{subfigure}
%     \caption{Impact of permutation estimator on real-world data.}
%         \label{fig:armman-april21-reshuffled-s-agnostic}
% \end{figure}
\begin{figure}


     \centering
         \includegraphics[width=1.6in]{figs/armman/cumEngDropPrev-april21-stateAgnostic3.png}
         \includegraphics[width=1.6in]{figs/armman/raw-april21-stateAgnostic4.png}
    \caption{Impact of permutation estimator on real-world data.}
        \label{fig:armman-april21-reshuffled-s-agnostic}
\end{figure}
\section{CONCLUSION}
 We address the critical gap of mitigating the error in evaluation of resource allocation policies through RCTs. We propose a new estimator using a novel concept based on the idea of retrospective reassignment of participants to experimental arms. We prove that our estimator is unbiased while simultaneously reducing sample variance, and hence reduces error. Through empirical tests on multiple data sets -- including a real-world dataset in a socially critical domain -- we show that our approach cuts error by as much as $70\%$ and from a single given RCT, can achieve benefits equivalent of running upto $13$ \textit{independent} RCTs in parallel.
\bibliography{references}
\bibliographystyle{icml2022}

%%%%%%%%%%%%%%%%%%%%%%%%%%%%%%%%%%%%%%%%%%%%%%%%%%%%%%%%%%%%%%%%%%%%%%%%%%%%%%%
%%%%%%%%%%%%%%%%%%%%%%%%%%%%%%%%%%%%%%%%%%%%%%%%%%%%%%%%%%%%%%%%%%%%%%%%%%%%%%%
% APPENDIX
%%%%%%%%%%%%%%%%%%%%%%%%%%%%%%%%%%%%%%%%%%%%%%%%%%%%%%%%%%%%%%%%%%%%%%%%%%%%%%%
%%%%%%%%%%%%%%%%%%%%%%%%%%%%%%%%%%%%%%%%%%%%%%%%%%%%%%%%%%%%%%%%%%%%%%%%%%%%%%%
\newpage
\appendix
\onecolumn
\section{Complete Proofs to Theoretical Results}
\label{app:theoretical-results}

\subsection{Proof of Lemma~\ref{lem:equivalence-relation}}
\label{app:theoretical-results-equivalence-relation}
\equivalenceRelation
\begin{proof}
To prove that $\dag$ is an equivalence relation, we show that it is reflexive, symmetric and transitive. $\dag$ is reflexive because $\forall \mathfrak{C} \in \mathcal{C}, \mathfrak{C} \in \mathcal{C}^\dag(\mathfrak{C})$ by definition. Furthermore, $\dag$ is also trivially symmetric because if $\mathfrak{C}_2 \in \mathcal{C}^\dag(\mathfrak{C}_1)$, then by definition, the allocations received by all individuals at all times are identical under both $\mathfrak{C}_1$ and $\mathfrak{C}_2$. Hence $\mathfrak{C}_1 \in \mathcal{C}^\dag(\mathfrak{C}_2)$. Finally, $\dag$ is also transitive because if all allocations received by all individuals at all times are identical in $\mathfrak{C}_1$ and $\mathfrak{C}_2$ as well as in $\mathfrak{C}_2$ and $\mathfrak{C}_3$, that means the allocations are also identical in $\mathfrak{C}_1$ and $\mathfrak{C}_3$. Thus formally, if $\mathfrak{C}_2 \in \mathcal{C}^\dag(\mathfrak{C}_1)$ and $\mathfrak{C}_3 \in \mathcal{C}^\dag(\mathfrak{C}_2)$, then $\mathfrak{C}_3 \in \mathcal{C}^\dag(\mathfrak{C}_1)$. Thus $\dag$ is an equivalence relation over $\mathcal{C}$ and consequently, partitions $\mathcal{C}$ into a family of equivalence classes $\mathcal{C}^\dag(\cdot)$ such that every element $\mathfrak{C} \in \mathcal{C}$ lies in exactly one partition \cite{enderton1977elements}. 
\end{proof}

\subsection{Proof of Theorem~\ref{thm:unbiased}}
\label{app:theoretical-results-eval-unbiased}
\evalUnbiased*
\begin{proof} \begin{align*}
     \textrm{Eval}^*(\pi_m)= & \mathbb{E}_{S_m \sim P^*} \Big[\mathbb{E}_{\mathfrak{C} \sim \mathcal{C}}\big[\textrm{Eval}(\pi_m)\big]\Big] \\
    =  \mathbb{E}_{S_m \sim P^*}& \Big[ \sum_{\mathfrak{C}~\in~ \mathcal{C}}\mathrm{Prob}[\mathfrak{C}]\cdot\textrm{Eval}(\pi_m|\mathfrak{C}) \Big] \\
    =  \mathbb{E}_{S_m \sim P^*}\Big[&\frac{1}{|\mathcal{C}|}\sum_{\mathfrak{C}~\in~ \mathcal{C}}\textrm{Eval}(\pi_m|\mathfrak{C}) \Big] &(~\textrm{$\because$ all $\mathfrak{C}$ equally likely})\\
    =  \mathbb{E}_{S_m \sim P^*}&\bigg[\frac{1}{|\mathcal{C}|}\Big[\sum_{\mathfrak{C}~\in~ \mathcal{P}_1}\textrm{Eval}(\pi_m|\mathfrak{C}) + \dots + \sum_{\mathfrak{C}~\in~ \mathcal{P}_{\eta}}\textrm{Eval}(\pi_m|\mathfrak{C}) \Big] \bigg]\\
    \textrm{where }  \{\mathcal{P}_1, & \dots, \mathcal{P}_{\eta}\} \textrm{ defines partition of $\mathcal{C}$ induced by $\dag$.} \\ 
    =  \mathbb{E}_{S_m \sim P^*}&\bigg[\sum_{j \in [\eta]}\frac{|\mathcal{P}_j|}{|\mathcal{C}|}\cdot\frac{1}{|\mathcal{P}_j|}.\Big[\sum_{\mathfrak{C}~\in~ \mathcal{P}_j}\textrm{Eval}(\pi_m|\mathfrak{C}) \Big] \bigg]\\
    =  \mathbb{E}_{S_m \sim P^*}&\bigg[\sum_{j \in [\eta]}\frac{|\mathcal{P}_j|}{|\mathcal{C}|}\cdot\Big[\textrm{Eval}^\dag(\pi_m|\mathfrak{C}) \Big]\bigg]  &(\forall \mathfrak{C} \in \mathcal{P}_j)\\
    =  \mathbb{E}_{S_m \sim P^*}&\bigg[\sum_{j \in [\eta]} \textrm{Prob}[\mathcal{P}_j]\cdot\Big[\textrm{Eval}^\dag(\pi_m|\mathfrak{C}) \Big]\bigg]&(\forall \mathfrak{C} \in \mathcal{P}_j) \\
    =  \mathbb{E}_{S_m \sim P^*}&\bigg[\sum_{j \in [\eta]} \sum_{\mathfrak{C} \in \mathcal{P}_j}\textrm{Prob}[\mathfrak{C}]\cdot\Big[\textrm{Eval}^\dag(\pi_m|\mathfrak{C}) \Big]\bigg] &\big(\because \textrm{Prob}[\mathcal{P}_j] = \sum_{\mathfrak{C} \in \mathcal{P}_j} \textrm{Prob}[\mathfrak{C}] \big)\\
    =  \mathbb{E}_{S_m \sim P^*}&\mathbb{E}_{\mathfrak{C} \sim \mathcal{C}}[\textrm{Eval}^\dag(\pi_m)]
\end{align*} 
\end{proof}

% \subsection{Proof of Corollary~\ref{cor:delta-unbiased}}
% \label{app:theoretical-results-delta-unbiased-cor}
% \deltaUnbiased*
% \begin{proof}
% \begin{align*}
%      \mathbb{E}_{S_m \sim P^*}\mathbb{E}_{\mathfrak{C} \sim \mathcal{C}}\big[\Delta^{\dag}(\pi_m, \pi_{\textsc{CON}})\big] =~& \mathbb{E}_{S_m \sim P^*}\mathbb{E}_{\mathfrak{C} \sim \mathcal{C}}\big[\textrm{Eval}^{\dag}(\pi_m)-\textrm{Eval}^{\dag}(\pi_{\textsc{CON}})\big]\\
%      =~& \mathbb{E}_{S_m \sim P^*}\mathbb{E}_{\mathfrak{C} \sim \mathcal{C}}\big[\textrm{Eval}^{\dag}(\pi_m)\big]- \mathbb{E}_{S_m \sim P^*}\mathbb{E}_{\mathfrak{C} \sim \mathcal{C}}\big[\textrm{Eval}^{\dag}(\pi_{\textsc{CON}})\big]\\
%      =~& \textrm{Eval}^{*}(\pi_m)-\textrm{Eval}^{*}(\pi_{\textsc{CON}}) \\
%      =~& \Delta^*(\pi_m, \pi_{\textsc{CON}})
% \end{align*}
% \end{proof}

\subsection{Proof of Theorem~\ref{thm:variance-reduction-eval}}
\label{app:theoretical-results-variance-thm}
\varianceReductionEval*
\begin{proof}
    We compute the sample variance by first conditioning over the partition $\mathcal{P}_j$ (of the equivalence sets defined by $\dag$) that an instance of an assignment, $\mathfrak{C}$ belongs to and then accounting for the variance stemming from the candidate assignments $\mathfrak{C}$ within the partition. Thus we get: 
    \begin{align}
        \Var(\textrm{Eval}(\pi))  & = \frac{1}{|\mathcal{C}|}\sum_{\mathfrak{C} \in \mathcal{C}} \big(\textrm{Eval}(\pi|\mathfrak{C})-\textrm{Eval}^*(\pi)\big)^2 \nonumber \\ 
        & = \frac{1}{|\mathcal{C}|}\sum_{\mathfrak{j} \in [\eta]} \sum_{\mathfrak{C} \in \mathcal{P}_j} \big(\textrm{Eval}(\pi|\mathfrak{C})-\textrm{Eval}^*(\pi)\big)^2 \nonumber \\ 
        \label{eq:eval-final-expansion}
        & = \frac{1}{|\mathcal{C}|}\sum_{\mathfrak{j} \in [\eta]} \bigg( \sum_{\mathfrak{C} \in \mathcal{P}_j} 
        \big(\textrm{Eval}(\pi|\mathfrak{C})\big)^2 -2~\textrm{Eval}^*(\pi)\sum_{\mathfrak{C} \in \mathcal{P}_j}\textrm{Eval}(\pi|\mathfrak{C}) + |\mathcal{P}_j|\big(\textrm{Eval}^*(\pi)\big)^2 \bigg)
    \end{align}
    Similarly, we compute the variance of our estimator $\textrm{Eval}^\dag$ as: 
    \begin{align}
        \Var(\textrm{Eval}^\dag(\pi))  & = \frac{1}{|\mathcal{C}|}\sum_{\mathfrak{j} \in [\eta]} \sum_{\mathfrak{C} \in \mathcal{P}_j} \big(\textrm{Eval}^\dag(\pi|\mathfrak{C})-\textrm{Eval}^*(\pi)\big)^2 \nonumber \\
        & = \frac{1}{|\mathcal{C}|}\sum_{\mathfrak{j} \in [\eta]} \Bigg(|\mathcal{P}_j|\cdot\bigg\{\frac{\sum_{\mathfrak{C} \in \mathcal{P}_j}\textrm{Eval}(\pi|\mathfrak{C})}{|\mathcal{P}_j|}- \textrm{Eval}^*(\pi)\bigg\}^2\Bigg) \nonumber \\
        & = \frac{1}{|\mathcal{C}|}\sum_{\mathfrak{j} \in [\eta]} \Bigg(|\mathcal{P}_j|\cdot\bigg\{\bigg(\frac{\sum_{\mathfrak{C} \in \mathcal{P}_j}\textrm{Eval}(\pi|\mathfrak{C})}{|\mathcal{P}_j|}\bigg)^2 - 2~\textrm{Eval}^*(\pi)~\Big(\frac{\sum_{\mathfrak{C} \in \mathcal{P}_j}\textrm{Eval}(\pi|\mathfrak{C})}{|\mathcal{P}_j|}\Big) + \Big(\textrm{Eval}^*(\pi)\Big)^2\bigg\}\Bigg) \nonumber \\
        \label{eq:eval-dag-final-expansion}
        & = \frac{1}{|\mathcal{C}|}\sum_{\mathfrak{j} \in [\eta]} \Bigg\{\frac{\Big(\sum_{\mathfrak{C} \in \mathcal{P}_j}\textrm{Eval}(\pi|\mathfrak{C})\Big)^2}{|\mathcal{P}_j|} -2~\textrm{Eval}^*(\pi)\sum_{\mathfrak{C} \in \mathcal{P}_j}\textrm{Eval}(\pi|\mathfrak{C}) + |\mathcal{P}_j|\big(\textrm{Eval}^*(\pi)\big)^2\Bigg\} 
    \end{align}
   Subtracting expression in Equation~\ref{eq:eval-dag-final-expansion} from the expression in Equation~\ref{eq:eval-final-expansion} gives: 
   \begin{align}
        \label{eq:variance-contraction-final-expression-appendix}
       \Var(\textrm{Eval}(\pi))- \Var(\textrm{Eval}^\dag(\pi)) = \frac{1}{|\mathcal{C}|}\sum_{\mathfrak{j} \in [\eta]} \Bigg( \sum_{\mathfrak{C} \in \mathcal{P}_j} 
        \big(\textrm{Eval}(\pi|\mathfrak{C})\big)^2 - \frac{\Big(\sum_{\mathfrak{C} \in \mathcal{P}_j}\textrm{Eval}(\pi|\mathfrak{C})\Big)^2}{|\mathcal{P}_j|} \Bigg)
   \end{align}

We can show that the expression for variance contraction derived in Equation~\ref{eq:variance-contraction-final-expression-appendix} is non-negative as a direct consequence of the Cauchy-Schwarz inequality. The Cauchy-Schwarz inequality states that for two vectors $\textbf{u}$ and $\textbf{v}$, $|\langle\textbf{u},\textbf{v}\rangle|^2 \le \langle \textbf{u},\textbf{u}\rangle \cdot \langle \textbf{v},\textbf{v}\rangle$, where $\langle \cdot , \cdot \rangle$ denotes the inner product. Setting $\textbf{u} = \underbrace{\Big[\textrm{Eval}(\pi |\mathfrak{C}_1), \dots, \textrm{Eval}(\pi |\mathfrak{C}_{|\mathcal{P}_j|})\Big]}_{|\mathcal{P}_j| \textrm{entries}})$ and $\textbf{v} = \underbrace{\Big[1, \dots, 1\Big]}_{|\mathcal{P}_j|~1's}$ yields the desired result: $\Var(\textrm{Eval}(\pi))- \Var(\textrm{Eval}^\dag(\pi)) \ge 0$.
\end{proof}

\subsection{Efficient Algorithm}

\begin{lemma}
\label{lem:equivalence-relation-efficient-algo}
 The relation $\dag_\Upsilon$ is an equivalence relation over both $\mathcal{C}$ as well as each set in the family $\mathcal{C}^\dag(\cdot)$ and the family of sets defined by $\mathcal{C}^\dag_\Upsilon(\cdot)$ forms a partition over $\mathcal{C}$.
% Consider the set: \[\tilde{\mathcal{C}} \coloneqq \{\bar{\mathcal{C}}_{\textsc{TOT}}(\mathfrak{C}) | \mathfrak{C} \in \mathcal{C}_{\textsc{TOT}}\}\] Then the unique elements in  $\tilde{\mathcal{C}}$ 
\end{lemma}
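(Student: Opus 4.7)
The plan is to mirror the three-step argument used for Lemma~\ref{lem:equivalence-relation}: verify reflexivity, symmetry and transitivity of $\dag_\Upsilon$, and then invoke the standard fact that an equivalence relation on a set partitions that set. I would begin by making the definition of $\dag_\Upsilon$ explicit: $\mathfrak{C}_1 \dag_\Upsilon \mathfrak{C}_2$ iff $\mathfrak{C}_2 \in \mathcal{C}^\dag(\mathfrak{C}_1)$ and, in addition, the index thresholds $\tau_j(t)$ for every arm $j$ and every timestep $t$ coincide in the two assignments. In other words $\dag_\Upsilon$ is the conjunction of $\dag$ with the predicate ``thresholds match,'' and in particular is a refinement of $\dag$.

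Reflexivity is immediate: $\mathfrak{C}$ trivially lies in $\mathcal{C}^\dag(\mathfrak{C})$ by Lemma~\ref{lem:equivalence-relation} and has the same thresholds as itself. For symmetry, both underlying conditions are symmetric: $\dag$ has already been shown symmetric, and equality of threshold vectors is a symmetric binary predicate. For transitivity, if $\mathfrak{C}_1 \dag_\Upsilon \mathfrak{C}_2$ and $\mathfrak{C}_2 \dag_\Upsilon \mathfrak{C}_3$, then transitivity of $\dag$ gives $\mathfrak{C}_1 \dag \mathfrak{C}_3$, while transitivity of equality of threshold vectors gives the remaining condition, so $\mathfrak{C}_1 \dag_\Upsilon \mathfrak{C}_3$. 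This establishes that $\dag_\Upsilon$ is an equivalence relation on $\mathcal{C}$, whence the equivalence classes $\mathcal{C}^\dag_\Upsilon(\cdot)$ partition $\mathcal{C}$ by the standard partition-from-equivalence-relation theorem \cite{enderton1977elements}.

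For the refinement claim---that $\dag_\Upsilon$ restricted to any block $\mathcal{C}^\dag(\mathfrak{C})$ of the coarser $\dag$-partition is itself an equivalence relation---I would simply observe that the restriction of an equivalence relation to any subset of the ground set remains an equivalence relation on that subset, and that because $\dag_\Upsilon$ refines $\dag$, every $\dag_\Upsilon$-class is contained entirely within a single $\dag$-class. Hence each element of $\mathcal{C}^\dag(\cdot)$ is further sub-partitioned by the restriction of $\mathcal{C}^\dag_\Upsilon(\cdot)$. I do not anticipate a substantive obstacle; the only point that merits care is the sequential setting, where the thresholds $\tau_j(t)$ are computed from the running state--action histories, so I would note that whenever two assignments agree on all allocations and states along the trajectory the corresponding threshold vectors are determined pointwise in time by that common history and thus compose consistently along chains of comparisons.
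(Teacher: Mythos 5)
Your proposal is correct and follows essentially the same route as the paper: both treat $\dag_\Upsilon$ as $\dag$ conjoined with equality of index thresholds, check reflexivity, symmetry, and transitivity componentwise, and invoke the standard equivalence-relation-to-partition fact, with the claim about each block of $\mathcal{C}^\dag(\cdot)$ following because $\dag_\Upsilon$ refines $\dag$. Your explicit observation that a restriction of an equivalence relation to a subset remains an equivalence relation is a slightly cleaner way to state what the paper dispatches as ``similar reasoning,'' but it is not a different argument.
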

\begin{proof}
Similar to Lemma~\ref{lem:equivalence-relation}, we prove that $\dag_\Upsilon$ is an equivalence relation by showing that it is reflexive, symmetric and transitive. $\dag_\Upsilon$ is reflexive because $\forall \mathfrak{C} \in \mathcal{C}, \mathfrak{C} \in \mathcal{C}^\dag_\Upsilon(\mathfrak{C})$ by definition. Furthermore, $\dag$ is also trivially symmetric because if $\mathfrak{C}_2 \in \mathcal{C}^\dag(\mathfrak{C}_1)$, then by definition, the index thresholds, as well as the allocations received by all individuals at all times, are identical under both $\mathfrak{C}_1$ and $\mathfrak{C}_2$. Hence $\mathfrak{C}_1 \in \mathcal{C}^\dag(\mathfrak{C}_2)$. Finally, $\dag$ is also transitive because if all index threshold and allocations received by all individuals at all times are identical in $\mathfrak{C}_1$ and $\mathfrak{C}_2$ as well as in $\mathfrak{C}_2$ and $\mathfrak{C}_3$, that means the same are also identical in $\mathfrak{C}_1$ and $\mathfrak{C}_3$. Thus formally, if $\mathfrak{C}_2 \in \mathcal{C}^\dag(\mathfrak{C}_1)$ and $\mathfrak{C}_3 \in \mathcal{C}^\dag(\mathfrak{C}_2)$, then $\mathfrak{C}_3 \in \mathcal{C}^\dag(\mathfrak{C}_1)$. Thus $\dag_\Upsilon$ is an equivalence relation over $\mathcal{C}$ and consequently, partitions $\mathcal{C}$ into a family of equivalence classes $\mathcal{C}^\dag(\cdot)$ such that every element $\mathfrak{C} \in \mathcal{C}$ lies in exactly one partition \cite{enderton1977elements}. Similar reasoning also shows that $\dag_\Upsilon$ is an equivalence relation over each set in the family $\mathcal{C}^\dag(\cdot)$.
\end{proof}

\begin{corollary}
\label{cor:unbiased-eval-dag-upsilon}
    $\textrm{Eval}^\dag_{\Upsilon}(\pi_m)~$ is an unbiased estimate of the expected value of the performance, $\textrm{Eval}^*(\pi)$, defined in equation~\ref{eq:eval-star-definition}. 
    i.e. $\mathbb{E}_{S_m \sim P^*}\mathbb{E}_{\mathfrak{C} \sim \mathcal{C}}[\textrm{Eval}^\dag(\pi_m)]=\textrm{Eval}^*(\pi_m)~\forall m \in [M]$
\end{corollary}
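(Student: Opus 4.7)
The plan is to reduce the corollary to Theorem~\ref{thm:unbiased} by mimicking its partition-based argument, but with $\dag_\Upsilon$ playing the role of $\dag$. Two ingredients already stated in the excerpt do essentially all the work: Theorem~\ref{thm:efficient-algorithm-works} rewrites $\textrm{Eval}^\dag_\Upsilon(\pi)$ as the uniform average of $\textrm{Eval}(\pi\mid\mathfrak{C})$ over $\mathcal{C}^\dag_\Upsilon(\mathfrak{C})$, and Lemma~\ref{lem:equivalence-relation-efficient-algo} guarantees that the family $\{\mathcal{C}^\dag_\Upsilon(\mathfrak{C}) : \mathfrak{C} \in \mathcal{C}\}$ is a partition of $\mathcal{C}$. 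Call this partition $\{\mathcal{Q}_1,\dots,\mathcal{Q}_{\eta'}\}$.

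First I would take the outer expectation $\mathbb{E}_{\mathfrak{C}\sim\mathcal{C}}[\textrm{Eval}^\dag_\Upsilon(\pi_m)]$ and, using that every $\mathfrak{C}\in\mathcal{C}$ is equally likely, write it as $\tfrac{1}{|\mathcal{C}|}\sum_{\mathfrak{C}\in\mathcal{C}}\textrm{Eval}^\dag_\Upsilon(\pi_m\mid\mathfrak{C})$. Next, by grouping terms according to the equivalence class $\mathcal{Q}_j$ in which each $\mathfrak{C}$ lies and invoking Theorem~\ref{thm:efficient-algorithm-works}, the inner quantity $\textrm{Eval}^\dag_\Upsilon(\pi_m\mid\mathfrak{C})$ takes the common value $\tfrac{1}{|\mathcal{Q}_j|}\sum_{\mathfrak{C}'\in\mathcal{Q}_j}\textrm{Eval}(\pi_m\mid\mathfrak{C}')$ for every $\mathfrak{C}\in\mathcal{Q}_j$. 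Each class therefore contributes $|\mathcal{Q}_j|\cdot\tfrac{1}{|\mathcal{Q}_j|}\sum_{\mathfrak{C}'\in\mathcal{Q}_j}\textrm{Eval}(\pi_m\mid\mathfrak{C}')=\sum_{\mathfrak{C}'\in\mathcal{Q}_j}\textrm{Eval}(\pi_m\mid\mathfrak{C}')$, and summing over $j$ collapses the double sum back to $\tfrac{1}{|\mathcal{C}|}\sum_{\mathfrak{C}\in\mathcal{C}}\textrm{Eval}(\pi_m\mid\mathfrak{C})=\mathbb{E}_{\mathfrak{C}\sim\mathcal{C}}[\textrm{Eval}(\pi_m)]$. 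Taking the outer expectation $\mathbb{E}_{S_m\sim P^*}$ through the finite sums then yields $\textrm{Eval}^*(\pi_m)$ by definition (Equation~\ref{eq:eval-star-definition}).

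The only subtlety worth flagging is the interchange of the state-randomness expectation $\mathbb{E}_{S_m\sim P^*}$ with the deterministic-in-$P^*$ partition structure. The partition $\{\mathcal{Q}_j\}$ depends on the realized index thresholds, and hence ostensibly on $S_m$, but Theorem~\ref{thm:efficient-algorithm-works} is stated as a deterministic identity conditional on $\mathfrak{C}$, so the collapse above holds pointwise in the state realizations and the outer expectation passes through trivially by linearity. This is the single step one must be careful about; everything else is the same double-counting/averaging argument used in Theorem~\ref{thm:unbiased}, so no new obstacle arises.
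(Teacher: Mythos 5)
Your argument is correct and is essentially the paper's own proof, which simply states that one applies the argument of Theorem~\ref{thm:unbiased} to the partition induced by $\dag_\Upsilon$ (via Lemma~\ref{lem:equivalence-relation-efficient-algo}), using Theorem~\ref{thm:efficient-algorithm-works} to identify $\textrm{Eval}^\dag_\Upsilon$ with the class average. You spell out the double-counting step and the interchange with $\mathbb{E}_{S_m\sim P^*}$ more explicitly than the paper does, but there is no substantive difference in approach.
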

\begin{proof}
    Using Lemma~\ref{lem:equivalence-relation-efficient-algo}, we apply similar arguments as Theorem~\ref{thm:unbiased} on the parition defined by $\dag_{\Upsilon}$ to show that $\textrm{Eval}^\dag_{\Upsilon}$ yields an unbiased estimate.
\end{proof}

\efficientAlgorithmWorks*
\begin{proof}
    The key to showing that the two are equivalent is in interpreting the summation of (modified) rewards over individuals in the view of average group rewards over assignments. Mathematically, starting from the definition of $\textrm{Eval}^\dag_{\Upsilon}(\pi_m)~$, the key lies in moving the summation operation over assignments in $\mathcal{C}^\dag_\Upsilon$ from outside the $\textrm{Eval}^\dag_\Upsilon()$ term to inside, applying it individually on each contributing participant. Formally, we can rewrite the expression of $\textrm{Eval}^\dag_{\Upsilon}(\pi_m)~$ as:
\begin{align}
  \textrm{Eval}^{\dag}_{\Upsilon}(\pi_j) &= \frac{\sum_{\mathfrak{C} \in \mathcal{C}^\dag_{\Upsilon}} \textrm{Eval}(\pi_j | \mathfrak{C})}{|\mathcal{C}^\dag_{\Upsilon}|} \\
  &=  \frac{\sum_{\mathfrak{C} \in \mathcal{C}^\dag_{\Upsilon}} \sum_{i \in C_j}r(S[i], A[i])}{|\mathcal{C}^\dag_{\Upsilon}|} \\
  &=  \sum_{\mathfrak{C} \in \mathcal{C}^\dag_{\Upsilon}} \frac{ \sum_{k \in \kappa}\sum_{i \in \textbf{G}_k} \mathbbm{1}_{\{i \in C_j\}} \cdot r(S[i], A[i]) + \sum_{i \in C_j} (1-\Lambda_i)\cdot r(S[i], A[i])}{|\mathcal{C}^\dag_{\Upsilon}|} \\
  & \textrm{(splitting the summation over groups $\textbf{G}_k$ and other individuals that can't be swapped)}\\
  % &=  \sum_{i \in C_j} \Bigg(\frac{ \sum_{\mathfrak{C} \in \mathcal{C}^\dag_{\Upsilon}} r(S[i, 0:T], A[i,1:T])}{|\mathcal{C}^\dag_{\Upsilon}|}\Bigg) \\
  &=  \sum_{k \in \kappa} \sum_{i \in \textbf{G}_k} \Bigg[\sum_{\mathfrak{C} \in \mathcal{C}^\dag_{\Upsilon}} \frac{ \mathbbm{1}_{\{i \in C_j\}}\cdot r(S[i], A[i])}{|\mathcal{C}^\dag_{\Upsilon}|} \Bigg] + \sum_{i \in C_j} (1-\Lambda_i)\cdot r(S[i], A[i])\\ 
  &=  \sum_{k \in \kappa} \sum_{i \in \textbf{G}_k} \Bigg[\sum_{\mathfrak{C} \in \mathcal{C}^\dag_{\Upsilon}} \frac{ \mathbbm{1}_{\{i \in C_j\}}}{|\mathcal{C}^\dag_{\Upsilon}|} \cdot  r(S[i], A[i]) \Bigg] + \sum_{i \in C_j} (1-\Lambda_i)\cdot r(S[i], A[i])\\
  &=  \sum_{k \in \kappa} \sum_{i \in \textbf{G}_k} \Bigg[\textrm{Pr}(i \in C_j | \mathcal{C}^\dag_{\Upsilon} )\cdot  r(S[i], A[i]) \Bigg] + \sum_{i \in C_j} (1-\Lambda_i)\cdot r(S[i], A[i]) \\
    &=  \sum_{k \in \kappa} \sum_{i \in \textbf{G}_k} \Bigg[\frac{ |\{\iota : \iota \in (\textbf{G}_{\phi(i)} \cap C_j)\}|}{|\textbf{G}_{\phi(i)}|}\cdot  r(S[i, 0:T], A[i,1:T]) \Bigg] + \sum_{i \in C_j} (1-\Lambda_i)\cdot r(S[i], A[i])\\
    &=  \sum_{k \in \kappa} |\{\iota : \iota \in (\textbf{G}_{\phi(i)} \cap C_j)\}| \cdot \sum_{i \in \textbf{G}_k} \Bigg[\frac{1}{|\textbf{G}_{\phi(i)}|}\cdot  r(S[i, 0:T], A[i,1:T]) \Bigg] + \sum_{i \in C_j} (1-\Lambda_i)\cdot r(S[i], A[i])\\
    &=  \sum_{k \in \kappa} |\{\iota : \iota \in (\textbf{G}_{\phi(i)} \cap C_j)\}| \cdot  \Tilde{r}_{k} + \sum_{i \in C_j} (1-\Lambda_i)\cdot r(S[i], A[i])\\
    &=  \sum_{i \in C_j} \Tilde{r}_{\phi(i)}\cdot \Lambda_i + \sum_{i \in C_j} (1-\Lambda_i)\cdot r(S[i], A[i], 1:T)
  \label{eq:summation-over-individuals}
\end{align}
% The term inside the square brackets above, represents sum of rewards of individuals belonging to group $\textbf{G}_k$, summed up for all assignments in $\mathcal{C}^\dag_{\Upsilon}$. However, note that each assignment in $\mathcal{C}^\dag_{\Upsilon}$ is determined by a unique permutation of individuals within groups $\textbf{G}_k$. In other words, 
% Below, we show how the term inside the outer summation in Equation~\ref{eq:summation-over-individuals} can be computed cheaply for a fixed participant slot $i$ without explicitly enumerating all elements in $\mathcal{C}^\dag$. 

% represents the average reward accrued by the original individual $i$ or by individuals who supplant $i$ in observable counterfactual assignments in $\mathcal{C}^\dag$. Our algorithmic idea is to determine all individuals in $C_0 \cup C_1$ who can replace an individual $i$  

\end{proof}

\section{Casting Resource Allocation Policies as Index Policies}
\label{app:index-based-policies}

\paragraph{Control:} A control group that sees no interventions can be handled by using any randomly generated index matrix with finite entries. Setting the index threshold $\Upsilon_i = \infty~\forall~i$ ensures that no individual assigned the control policy gets picked for intervention. 

\paragraph{Round Robin:} Common policies such as `round robin', that operate by selecting individuals cyclically for intervention in a set order can also be represented as index policies. The index for each individual at each time step, can be determined in two stages. First, we consider the feature used for ranking the $N$ individuals and we start by setting $\Upsilon_i (t) \coloneqq r$, for $r \in \{1, \dots N\}$ where $r$ denotes the priority rank of the individual (highest rank picked first). Next, each time an individual receives an action $a=1$, we want to push them at the bottom of the queue, so we subtract $N$ from their index for all future timesteps, repeating this process for each instance of $a=1$. 

\section{Additional Experimental Results}
\label{app:experimental-results}

\begin{table}
\caption{Sample variance in Measured Performance Lift}
\label{variance-table}
\vskip 0.15in
\begin{center}
\begin{small}
\begin{sc}
\begin{tabular}{cccccccc}
\toprule
% T & $B$ & $\pi_1~\textrm{v}~\pi_0$                        & Raw & Permuted & IPW & $n$-val & $n$-RCT's\\
% \midrule
% $1$&    3\%     &$\pi_{\textsc{wi}}$ v $\pi_{\textsc{gr}}$     & 49.09 & 4.94 &\textbf{ 0.48}   & 9 & 4.84\\
% $1$&    25\%    &$\pi_{\textsc{wi}}$ v $\pi_{\textsc{gr}}$    & 49.45 & \textbf{19.94 }& 78.12 & 2 & 18.46\\
% $10$&   3\%     &$\pi_{\textsc{gr}}$ v $\pi_{\textsc{gr}}$     & 26356 & \textbf{1860} & NA    & 13 & 1755\\
% $10$&   3\%     &$\pi_{\textsc{wi}}$ v $\pi_{\textsc{gr}}$     & 2348 & \textbf{728} & NA      & 4 & 586\\
% $10$&   3\%     &$\pi_{\textsc{wi}}$ v $\pi_{\textsc{wi}}$     & 2381 & \textbf{916} & NA      & 3 & 763\\
% % $20$& x\% &$\pi_{\textsc{SH}}$ v $\pi_{\textsc{NR}}$     & 8.507 & 4.039 & NA &NA & NA\\
T & $B$ & $\pi_1~\textrm{v}~\pi_0$                        & raw & permuted & ipw & $n$-val \\
\midrule
$1$&    3\%     &$\pi_{\textsc{wi}}$ v $\pi_{\textsc{gr}}$     & 49.09 & 4.94 &\textbf{ 0.48}   & 9 \\
$1$&    10\%    &$\pi_{\textsc{wi}}$ v $\pi_{\textsc{gr}}$    & 49.86 & 15.11 & \textbf{6.66}   & 3 \\
$1$&    25\%    &$\pi_{\textsc{wi}}$ v $\pi_{\textsc{gr}}$    & 49.45 & \textbf{19.94 }& 78.12 & 2 \\
$10$&   3\%     &$\pi_{\textsc{wi}}$ v $\pi_{\textsc{wi}}$     & 2381 & \textbf{916} & NA      & 3 \\
$10$&   3\%     &$\pi_{\textsc{wi}}$ v $\pi_{\textsc{gr}}$     & 2348 & \textbf{728} & NA      & 4 \\
$10$&   3\%     &$\pi_{\textsc{gr}}$ v $\pi_{\textsc{gr}}$     & 26356 & \textbf{1860} & NA    & 13 \\
$10$&   10\%     &$\pi_{\textsc{gr}}$ v $\pi_{\textsc{gr}}$     & 25983 & \textbf{3808} & NA    & 7 \\
$10$&   25\%     &$\pi_{\textsc{gr}}$ v $\pi_{\textsc{gr}}$     & 23619 & \textbf{5477} & NA    & 5 \\
% $20$& x\% &$\pi_{\textsc{SH}}$ v $\pi_{\textsc{NR}}$     & 8.507 & 4.039 & NA &NA & NA\\
\bottomrule
\end{tabular}
\end{sc}
\end{small}
\end{center}
\vskip -0.1in
\end{table}

\subsection{Synthetic Data Generation in Section~\ref{sec:NR-SH-ID-data} Explained}
\label{app:experimental-results-NR-SH-ID-data}
\begin{figure}[H]
    \centering
    \includegraphics[width=0.3\linewidth]{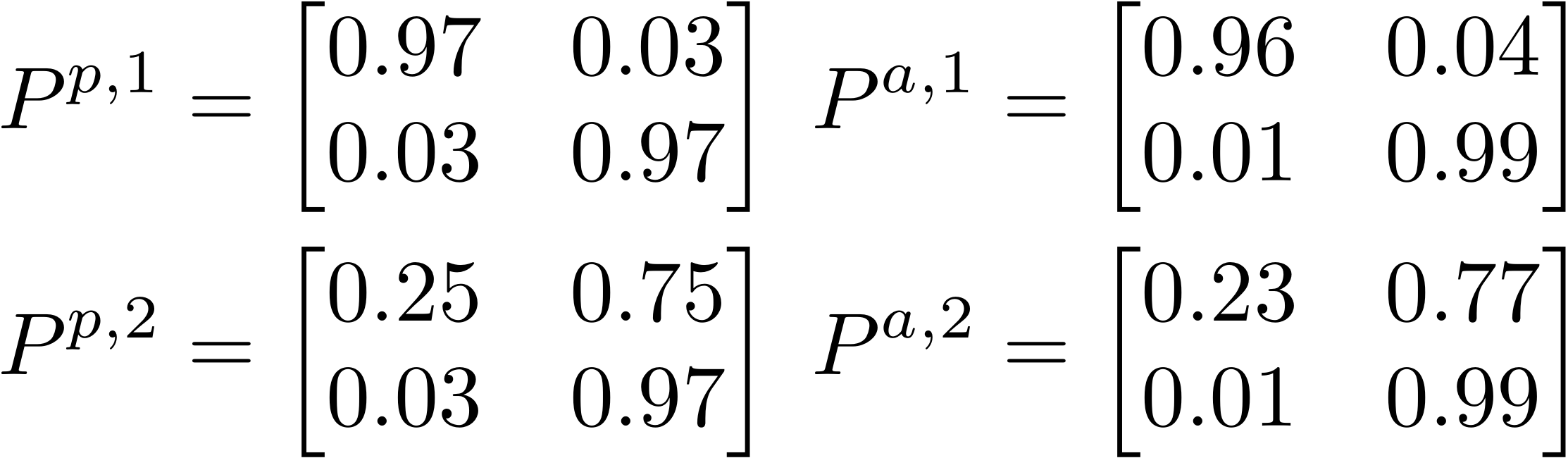}
    \caption{Probability values forming the matrix $P_1$ and $P_2$}
    \label{fig:Synthetic-NR-SH-probs}
\end{figure}
 We reproduce the transition probabilities $P_1$ and $P_2$ used in our simulation, adopted from \cite{mate2020collapsing} in Figure~\ref{fig:Synthetic-NR-SH-probs}. Each $P$ comprises of a set of probabilities under each of the two actions ($a=0$, denoted as `p', for passive and $a=1$ denoted as `a', for active). 

Intuition is that $P_1$ has a very small $P^a_{0,1}$ and $P^p_{0,1}$ and is thus difficult to revive once it enters state $s=0$, even with an intervention ($a=1$), making it important to keep intervening to stop the individual from ever entering $s=0$.
On the other hand, $P_2$ has a large $P^p_{0,1}$, making it self-correcting, meaning the individual is likely to return to $s=1$ quickly even without intervention.
\subsection{Single-shot RCTs}
\label{app:experimental-results-single-shot}

\begin{figure}[H]
    \centering
    \includegraphics[width=0.35\textwidth]{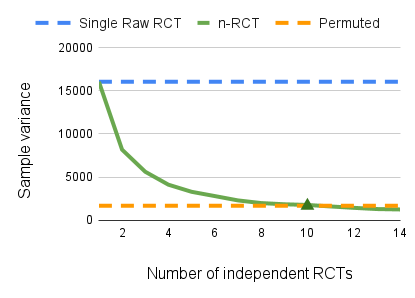}
    \caption{Variance reduces by running and averaging over $n$-independently run trials.}
    \label{fig:variance-n-trials}
\end{figure}

\begin{figure}[H]
    \centering
    \includegraphics[width=0.29\textwidth]{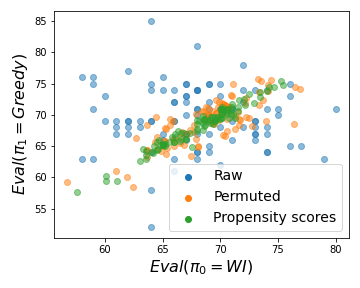}
    \includegraphics[width=0.19\textwidth]{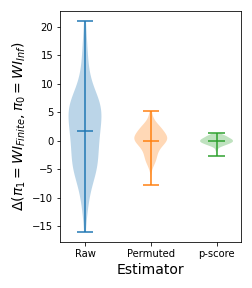}
    \caption{Single shot setup}
    \label{fig:single-shot-low-budget}
\end{figure}

\subsection{Sequential RCTs}
\label{app:experimental-results-sequential}
We run more comparisons using $N=100$ individuals per arm, simulating $500$ instances of trials for $T=10$ timesteps. The $n-$values are listed in Table~\ref{variance-table}.

\begin{figure}[h]
    \centering
    \includegraphics[width=0.32\textwidth]{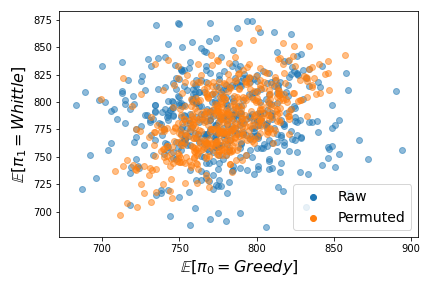}
    \includegraphics[width=0.16\textwidth]{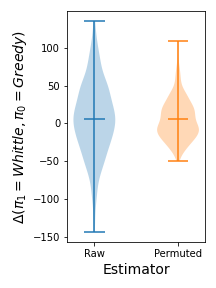}
    \includegraphics[width=0.32\textwidth]{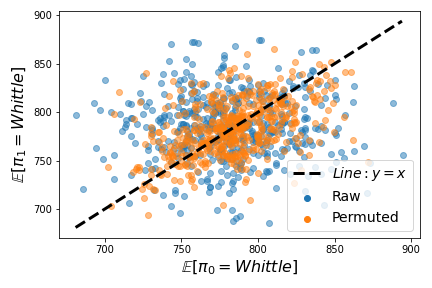}
    \includegraphics[width=0.17\textwidth]{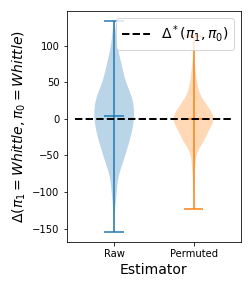}
    \caption{Whittle vs Greedy (left two panels) and Whittle vs whittle (right two panels)}
    \label{fig:sequential-whittle-greedy}
\end{figure}

% \begin{figure}[h]
%     \centering
%     \includegraphics[width=0.33\textwidth]{figs/tb/multi/new/scatter-N100-k3-T10-Whittle-Whittle-Nseeds500-n9-withnRCTFalse-datatb-exp1c.png}
%     \includegraphics[width=0.18\textwidth]{figs/tb/multi/new/violin-N100-k3-T10-Whittle-Whittle-Nseeds500-n5-withnRCTFalse-datatb-exp1c.png}
%     \caption{Whittle vs whittle}
%     \label{fig:sequential-whittle-whittle}
% \end{figure}

\paragraph{Setup: Greedy vs Greedy}
For more granular analysis, we consider the state trajectories of individuals participating in the trial. This uses $N=1000$ individuals per arm and simulates $30$ instances of trials for $T=10$ timesteps. We see that orange trajectories (after permutation) is closer to the expected value than blue trajectories (blue)
\begin{figure}[h]
    \centering
    \includegraphics[width=0.5\textwidth]{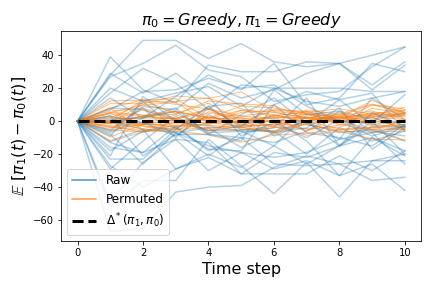}
    \caption{Time trajectories of Greedy v Greedy}
    \label{fig:time-trajectories}
\end{figure}

%%%%%%%%%%%%%%%%%%%%%%%%%%%%%%%%%%%%%%%%%%%%%%%%%%%%%%%%%%%%%%%%%%%%%%%%%%%%%%%
%%%%%%%%%%%%%%%%%%%%%%%%%%%%%%%%%%%%%%%%%%%%%%%%%%%%%%%%%%%%%%%%%%%%%%%%%%%%%%%

\end{document}

% --- supplement: old-files/supplement.tex ---

% If your paper is accepted and the title of your paper is very long,
% the style will print as headings an error message. Use the following
% command to supply a shorter title of your paper so that it can be
% used as headings.
%
%\runningtitle{I use this title instead because the last one was very long}

% If your paper is accepted and the number of authors is large, the
% style will print as headings an error message. Use the following
% command to supply a shorter version of the authors names so that
% they can be used as headings (for example, use only the surnames)
%
%\runningauthor{Surname 1, Surname 2, Surname 3, ...., Surname n}

% Supplementary material: To improve readability, you must use a single-column format for the supplementary material.
\onecolumn
\aistatstitle{Instructions for Paper Submissions to AISTATS 2022: \\
Supplementary Materials}

\section{FORMATTING INSTRUCTIONS}

To prepare a supplementary pdf file, we ask the authors to use \texttt{aistats2022.sty} as a style file and to follow the same formatting instructions as in the main paper.
The only difference is that the supplementary material must be in a \emph{single-column} format.
You can use \texttt{supplement.tex} in our starter pack as a starting point, or append the supplementary content to the main paper and split the final PDF into two separate files.

Note that reviewers are under no obligation to examine your supplementary material.

\section{MISSING PROOFS}

The supplementary materials may contain detailed proofs of the results that are missing in the main paper.

\subsection{Proof of Lemma 3}

\textit{In this section, we present the detailed proof of Lemma 3 and then [ ... ]}

\section{ADDITIONAL EXPERIMENTS}

If you have additional experimental results, you may include them in the supplementary materials.

\subsection{The Effect of Regularization Parameter}

\textit{Our algorithm depends on the regularization parameter $\lambda$. Figure 1 below illustrates the effect of this parameter on the performance of our algorithm. As we can see, [ ... ]}

\vfill